\newcommand{\R}{\mathbb{R}}
\newcommand{\norm}[1]{\left\lVert#1\right\rVert}
\title*{On the well-posedness of uncalibrated photometric stereo under general lighting}
\author{Mohammed Brahimi, Yvain Qu\'eau, Bjoern Haefner and Daniel Cremers}
\institute{Mohammed Brahimi \at Technical University of Munich, \email{mohammed.brahimi@tum.de}
\and Yvain Qu\'eau \at Normandie Univ., UNICAEN, ENSICAEN, CNRS, \email{yvain.queau@ensicaen.fr}
\and Bjoern Haefner \at Technical University of Munich, \email{bjoern.haefner@tum.de}
\and Daniel Cremers \at Technical University of Munich, \email{cremers@tum.de}
}
\begin{document}
\maketitle


\abstract{Uncalibrated photometric stereo aims at estimating the 3D-shape of a surface, given a set of images captured from the same viewing angle, but under unknown, varying illumination. While the theoretical foundations of this inverse problem under directional lighting are well-established, there is a lack of mathematical evidence for the uniqueness of a solution under general lighting. On the other hand, stable and accurate heuristical solutions of uncalibrated photometric stereo under such general lighting have recently been proposed. The quality of the results demonstrated therein tends to indicate that the problem may actually be well-posed, but this still has to be established. The present paper addresses this theoretical issue, considering first-order spherical harmonics approximation of general lighting. Two important theoretical results are established. First, the orthographic integrability constraint ensures uniqueness of a solution up to a global concave-convex ambiguity, which had already been conjectured, yet not proven. Second, the perspective integrability constraint makes the problem well-posed, which generalizes a previous result limited to directional lighting. Eventually, a closed-form expression for the unique least-squares solution of the problem under perspective projection is provided, allowing numerical simulations on synthetic data to empirically validate our findings.}

\section{Introduction}

Among the many photographic techniques which can be considered for the 3D-reconstruction of a still surface, photometric stereo~\cite{Woodham1980} is often considered as a first choice when it comes to the recovery of very thin geometric structures. Nevertheless, the classic formulation of photometric stereo requires illumination to be highly controlled: each image must be captured under a single collimated light source at infinity, and the direction and relative intensity of each source must be calibrated beforehand. In practice, this restricts possible applications of the technique to laboratory setups where collimation of light can be ensured and a (possibly tedious) calibration procedure can be carried out.

Considering uncalibrated general lighting i.e., lighting induced by unknown, non-collimated sources and in the presence of ambient lighting, would both drastically simplify the 3D-scanning process for non-experts, and allow to bring photometric stereo outside of the lab~\cite{Shi2014}. The theoretical foundations of the problem under uncalibrated directional lighting are well-understood: the solution can be recovered only up to a linear transformation~\cite{Hayakawa1994}. When integrability is enforced, this linear ambiguity reduces to the generalized bas-relief one under orthographic projection~\cite{Yuille1997}, and vanishes under perspective projection~\cite{Papadhimitri2013}. This work rather focuses on uncalibrated general lighting represented using first-order spherical harmonics~\cite{Basri2003,Ramamoorthi2001}, in which case the solution can be recovered only up to a Lorentz transformation~\cite{Basri2007} and it has been conjectured - but not proven yet, that additional constraints such as integrability may reduce this ambiguity. One reason for thinking that this conjecture might hold is that stable numerical implementations of uncalibrated photometric stereo under general illumination have been proposed recently, under both orthographic~\cite{Bartal2018,Mo2018} and perspective~\cite{Haefner2019} projections. Despite having no theoretical foundation, the results provided therein do not exhibit a significant low-frequency bias which would reveal an underlying ambiguity: empirically, the problem seems well-posed.

The objective of this paper is thus to establish the uniqueness of a solution to the problem of uncalibrated photometric stereo under general illumination, represented by first-order spherical harmonics. After discussing the classic case of directional lighting in Section~\ref{sec:2}, we characterize in Section~\ref{sec:3} the ambiguities arising in uncalibrated photometric stereo under first-order spherical harmonics lighting. Then, we show in Section~\ref{sec:4} that imposing integrability of the sought normal field resolves such ambiguities. In the orthographic case, only a global concave-convex ambiguity remains, hence the ambiguity is characterized by a single binary degree of freedom. For comparison, in the directional case there are three real degrees of freedom characterizing the \textit{generalized bas-relief} (GBR) ambiguity~\cite{Belhumeur1999}. Moreover, under perspective projection the problem becomes completely well-posed, which generalizes the result of~\cite{Papadhimitri2013} to more general lighting. In this case the solution can even be determined in closed-form, as shown in Section~\ref{sec:5}. Section~\ref{sec:6} eventually recalls our findings, and suggests future research directions.

\section{Preliminaries: photometric stereo under directional lighting}
\label{sec:2}

Assuming a Lambertian surface is observed from a still camera under $m \geq 1$ different directional lighting indexed by $i \in \{1,\dots,m\}$, the graylevel in the $i$-th image can be modeled as follows: 
\begin{equation}
    I^i(x) = \rho(x) \, \mathbf{n}(x)^\top \, \mathbf{l}^i, \qquad \forall x \in \Omega,
    \label{eq:1}
\end{equation}
where $\Omega \subset \R^2$ is the reconstruction domain (projection of the 3D-surface onto the image plane), $\rho(x)>0$ is the albedo at the surface point conjugate to pixel $x$, $\mathbf{n}(x) \in \mathbb{S}^2 \subset \mathbb{R}^3$ is the unit-length outward normal at this point, and $\mathbf{l}^i \in \mathbb{R}^3$ is a vector oriented towards the light source whose norm represents the relative intensity of the source. Photometric stereo consists, given a set of graylevel observations $I^i, i \in \{1,\dots,m\}$, in estimating the shape (represented by the surface normal $\mathbf{n}$) and the reflectance (represented by the surface albedo $\rho$). Depending whether the lighting vectors $\mathbf{l}^i$ are known or not, the problem is called calibrated or uncalibrated. 

\subsection{Calibrated photometric stereo under directional lighting}

Woodham showed in the late 70s~\cite{Woodham1978} that $m \geq 3$ images captured under non-coplanar, known lighting vectors were sufficient to solve this problem. Indeed, defining for every $x \in \Omega$ the following observation vector $\mathbf{i}(x) \in \mathbb{R}^{m}$, lighting matrix $\mathbf{L} \in \mathbb{R}^{m \times 3}$ and surface vector $\mathbf{m}(x) \in \mathbb{R}^3$ :
\begin{equation}
    \mathbf{i}(x) = \begin{bmatrix} I^1(x) \\ \vdots \\ I^m(x) \end{bmatrix}, \qquad 
    \mathbf{L} = \begin{bmatrix} \mathbf{l}^{1 \top} \\ \vdots \\ \mathbf{l}^{m \top} \end{bmatrix}, \qquad 
    \mathbf{m}(x) = \rho(x) \mathbf{n}(x),
    \label{eq:2}
\end{equation}
the set of equations~\eqref{eq:1} can be rewritten as a linear system in $\mathbf{m}(x)$:
\begin{equation}
    \mathbf{i}(x) = \mathbf{L} \, \mathbf{m}(x), \qquad \forall x \in \Omega.
\label{eq:3}
\end{equation}
Provided that $\mathbf{L}$ is of rank three,~\eqref{eq:3} admits a unique least-squares solution in $\mathbf{m}(x)$, from which the normal and albedo can be extracted according to:
\begin{equation}
    \rho(x) = \lvert \mathbf{m}(x) \rvert, \qquad
    \mathbf{n}(x) = \frac{\mathbf{m}(x)}{\lvert \mathbf{m}(x) \rvert}.
\label{eq:4}
\end{equation}

Such a simple least-squares approach may be replaced by robust variational or learning-based strategies to ensure robustness~\cite{Ikehata2018,Li2019,Queau2017,Radow2019}. There also exist numerical solutions for handling non-Lambertian reflectance models~\cite{Chen2019,Khanian2018,Mecca2016,Tozza2016}, non-distant light sources~\cite{Logothetis2016,Mecca2014,Queau2018,Queau2017c}, or the ill-posed cases where $m=2$~\cite{Kozera2018,Mecca2013,Onn1990,Queau2017b} or $m=1$~\cite{Breuss2012,Durou2008,Tozza2016b,Zhang1999}. Such issues are not adressed in the present paper which rather focuses on the theoretical foundations of uncalibrated photometric stereo.

\subsection{Uncalibrated photometric stereo under directional lighting}

The previous strategy relies on the knowledge of the lighting matrix $\mathbf{L}$, and it is not straightforward to extend it to unknown lighting. Let us illustrate this in the discrete setting, denoting the pixels as $x_j$, $j \in \{1,\dots,n\}$ where $n = \left| \Omega \right|$ is the number of pixels, and stack all the observations in an observation matrix $\mathbf{I} \in \mathbb{R}^{m \times n}$ and all the surface vectors in a surface matrix $\mathbf{M} \in \mathbb{R}^{3 \times n}$: 
\begin{equation}
    \mathbf{I} = \begin{bmatrix} \mathbf{i}(x_1), \dots, \mathbf{i}(x_n) \end{bmatrix}, \qquad 
    \mathbf{M} = \begin{bmatrix} \mathbf{m}(x_1), \dots, \mathbf{m}(x_n) \end{bmatrix}.
\end{equation}
Now, the set of $n$ linear systems~\eqref{eq:3} can be represented compactly as: 
\begin{equation}
    \mathbf{I} = \mathbf{L} \mathbf{M}
    \label{eq:6}
\end{equation}
where both the lighting matrix $\mathbf{L}$ and the surface matrix $\mathbf{M}$ are unknown. Since we know that $\mathbf{L}$ should be of rank three, a joint least-squares solution in $(\mathbf{L},\mathbf{M})$ can be computed using truncated singular value decomposition~\cite{Hayakawa1994}. Nevertheless, such a solution is not unique, since given a possible solution $(\mathbf{L},\mathbf{M})$, any couple $(\mathbf{L} \mathbf{A}^{-1}, \mathbf{A} \mathbf{M})$ with $\mathbf{A} \in GL(3,\R)$ is another solution: 
\begin{equation}
    \mathbf{I} = \mathbf{L} \mathbf{M} = \left(\mathbf{L} \mathbf{A}^{-1} \right)\left( \mathbf{A}  \mathbf{M} \right), \qquad \forall \mathbf{A} \in GL(3,\R),
    \label{eq:7}
\end{equation}
or equivalently, in the continuous setting:
\begin{equation}
    \mathbf{i}(x) =  \mathbf{L} \mathbf{m}(x) = \left(\mathbf{L} \mathbf{A}^{-1} \right)\left( \mathbf{A}  \mathbf{m}(x) \right), \qquad \forall (x,\mathbf{A}) \in \Omega \times GL(3,\R). 
    \label{eq:8}
\end{equation}

However, not any surface matrix $\mathbf{M}$ (or $\mathbf{m}$-field, in the continuous setting) is acceptable as a solution. Indeed, this encodes the geometry of the surface, through its normals. Assuming that the surface is regular, its normals should satisfy the so-called integrability (or zero-curl) constraint. This constraint permits to reduce the ambiguities of uncalibrated photometric stereo, as shown in the next two subsections.

\subsection{Integrability under orthographic projection}

Let us assume orthographic projection and denote $\mathbf{n}(x) := \left[n_1(x),n_2(x),n_3(x)\right]^\top$ the surface normal at 3D-point conjugate to pixel~$x$. Let us further represent the surface as a Monge patch i.e., a differentiable mapping $X : \Omega \to \R^3$ of the form $X(x) = (x,z(x))$,
where $z:\Omega \to \mathbb{R}$ is a depth map. Let us assume this map $z$ is twice differentiable, and let $\nabla z(x) = \left[ z_u(x),z_v(x)\right]^\top \in \mathbb{R}^2$ be its gradient in some orthonormal basis $(u,v)$ of the image plane. The integrability constraint is essentially a particular form of Schwarz' theorem, which implies that 
\begin{equation}
     z_{uv} = z_{vu} \qquad \text{over~} \Omega.
    \label{eq:9}
\end{equation}
From the definition  $\mathbf{n}(x) = \dfrac{\begin{bmatrix}
    z_u(x), &
    z_v(x), &
    -1
    \end{bmatrix}^\top}{\sqrt{z_u(x)^2 + z_v(x)^2 +1}} $ of the surface normal, and since $\mathbf{m}(x) = \rho(x) \, \mathbf{n}(x)$, Eq.~\eqref{eq:9} can be rewritten as: 
\begin{equation}
    \left(\frac{m_1}{m_3}\right)_v = \left(\frac{m_2}{m_3}\right)_u \qquad \text{over~} \Omega.
    \label{eq:10}
\end{equation}

Now, let us assume that one has found an $\mathbf{m}$-field solution of the  left-hand side of~\eqref{eq:8}, which further satisfies the integrability constraint~\eqref{eq:10} (in the discrete setting, this can be achieved using matrix factorization~\cite{Yuille1997} or convex optimization techniques~\cite{Sengupta2018}). It can be shown that not all transformations $\mathbf{A}$ in the  right-hand side of~\eqref{eq:8} preserve this constraint. Indeed, the only ones which are acceptable are those of the  generalized bas-relief group. Such matrices define a subgroup of $GL(3,\R)$ under the matrix product, and have the following form~\cite{Belhumeur1999}: 
\begin{equation}
\mathbf{G} = 
\begin{pmatrix}
\lambda & 0 & -\mu \\
      0 & \lambda  &  -\nu \\
      0 & 0 & 1
\end{pmatrix}, \qquad 
\mathbf{G}^{-1} = 
\begin{pmatrix}
1 & 0 & \mu/\lambda \\
      0 & 1  &  \nu/\lambda \\
      0 & 0 & 1
\end{pmatrix}, \qquad  (\lambda, \mu, \nu) \in \R^{3}  \text{ and } \lambda \neq 0.
\end{equation}

The three parameters $\mu,\nu$ and $\lambda$ characterize the GBR ambiguity inherent to uncalibrated photometric stereo under directional illumination and orthographic viewing. Intuitively, they can be understood as follows: any set of photometric stereo images can be reproduced by scaling the surface shape (this is the role of $\lambda$), adding a plane to it (this is the role of $\mu$ and $\nu$), and moving the lighting vectors accordingly. If one is given a prior on the distribution of albedo values, on that of lighting vectors, or on the surface shape, then the three parameters can be estimated i.e., the ambiguity can be resolved. The literature on that particular topic is extremely dense, see e.g.~\cite{Shi2019} for an overview,~\cite{Chen2019b} for a modern numerical solution based on deep learning, and~\cite{Peng2017} for an application to RGB-D sensing. As we shall prove later in Section~\ref{sec:4.1}, in the case of non-directional lighting represented using first-order spherical harmonics, the ambiguity is much simpler since it comes down to a global concave$/$convex one.

\subsection{Integrability under perspective projection}

To terminate this discussion on uncalibrated photometric stereo under directional lighting, let us discuss the case of perspective projection, which was shown to be well-posed by Papadhimitri and Favaro in~\cite{Papadhimitri2013}. In the following, $x = (u,v)$ denotes the pixel coordinates with respect to the principal point (which is the projection of the camera center onto the image plane) and ${f}>0$ denotes the focal length. The surface is now represented as the set of 3D-points $z(x) \left[u/{f},v/{f},1\right]^\top$. Now, let us examine the perspective counterpart of the orthographic integrability constraint~\eqref{eq:10}. \newpage

It is easy to show (see, e.g.,~\cite{Queau2018b}) that the surface normal is now defined as 
 \nopagebreak
\begin{equation}
\label{eq:normal_perspective}
    \mathbf{n}(x) = \frac{1}{ \sqrt{{f}^2 \lvert \nabla z(x) \rvert^2  + \left( -z(x) - \left[u,v\right]^\top \nabla z(x) \right)^2 } }
    \begin{bmatrix}
     {f} \, \nabla z (x) \\
     -z(x) - \left[u,v\right]^\top \nabla z(x)
    \end{bmatrix}.
\end{equation}
If we define the $\log$ depth map as: 
\begin{align}
    & \tilde{z} = \text{log}(z),
\end{align}
and denote:
\begin{align}
    & p = -\dfrac{n_1}{n_3},& & q = -\dfrac{n_2}{n_3}, \label{eq:new14} \\
    & \hat{p} = \dfrac{p}{f - up - vq}, & & \hat{q} = \dfrac{q}{f - up - vq}, \label{eq:new15}
\end{align}
then it is straightforward to show that 
\begin{equation}
     \nabla \tilde{z} = \left[\hat{p},\hat{q}\right]^\top, \label{eq:new16}
\end{equation}
and that Schwarz' theorem~\eqref{eq:9} can be equivalently rewritten in terms of the gradient of the $\log$ depth map:
\begin{equation}
     \tilde{z}_{uv} = \tilde{z}_{vu}. 
    \label{eq:schwarz_persp}
\end{equation}

This equation can be equivalently rewritten in terms of the coefficients of $\mathbf{m} = \rho\,\mathbf{n}$, just as we obtained~\eqref{eq:10} for the orthographic case. This rewriting is given by the following proposition, whose proof can be found in Appendix~\hyperref[sec:app3]{A}:
\begin{proposition}
\label{cor:2}
Let $\mathbf{m} =  \left[m_1,m_2,m_3\right]^\top:\,\Omega \to \R^3$ a field defined as $\mathbf{m}:=\rho \, \mathbf{n}$, with $\rho:\,\Omega \to \mathbb{R}$ an albedo map and $\mathbf{n}:\,\Omega \to \mathbb{S}^2 \subset \mathbb{R}^3$ a normal field. The  normal field $\mathbf{n}$ is integrable iff the coefficients of $\mathbf{m}$ satisfy the following relationship over~$\Omega$:
\begin{align}
  & u(m_{1}m_{2u} - m_{1u}m_{2}) + v(m_{1}m_{2v} - m_{1v}m_{2})\nonumber \\
  &  + f(m_{1}m_{3v} - m_{1v}m_{3}) + f(m_{2u}m_{3} - m_{2}m_{3u}) = 0. 
  \label{eq:perspectiveintegrabilityconstraint}
\end{align}
\end{proposition}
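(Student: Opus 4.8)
The plan is to start from the perspective integrability constraint already derived in the excerpt, equation~\eqref{eq:schwarz_persp}, which states $\tilde{z}_{uv} = \tilde{z}_{vu}$, and to translate this into a condition on $\mathbf{m}$ by chasing through the definitions. Since $\nabla \tilde{z} = [\hat{p},\hat{q}]^\top$ by~\eqref{eq:new16}, the integrability condition is equivalent to $\hat{p}_v = \hat{q}_u$. My first step is therefore to express $\hat{p}$ and $\hat{q}$ purely in terms of the coefficients $m_1,m_2,m_3$. Because $\mathbf{m}=\rho\,\mathbf{n}$, the ratios $n_i/n_3$ equal $m_i/m_3$, so by~\eqref{eq:new14} we have $p = -m_1/m_3$ and $q = -m_2/m_3$, with the albedo $\rho$ cancelling. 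Substituting into~\eqref{eq:new15} gives, after clearing the common factor,
\begin{equation}
  \hat{p} = \frac{-m_1}{f m_3 + u m_1 + v m_2}, \qquad
  \hat{q} = \frac{-m_2}{f m_3 + u m_1 + v m_2}.
\end{equation}
The key observation is that $\hat{p}$ and $\hat{q}$ share the common denominator $D := f m_3 + u m_1 + v m_2$, which will make the cross-differentiation manageable.

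Next I would impose $\hat{p}_v = \hat{q}_u$ by the quotient rule. Writing $\hat{p} = -m_1/D$ and $\hat{q} = -m_2/D$, the equation $\hat{p}_v = \hat{q}_u$ becomes
\begin{equation}
  \frac{-m_{1v} D + m_1 D_v}{D^2} = \frac{-m_{2u} D + m_2 D_u}{D^2}.
\end{equation}
Since $D$ does not vanish (it is, up to the positive albedo, proportional to the last coordinate of the perspective surface vector, which is nonzero for a genuine surface), I can multiply through by $D^2$ and reduce the identity to its numerator:
\begin{equation}
  (-m_{1v} D + m_1 D_v) = (-m_{2u} D + m_2 D_u),
\end{equation}
or equivalently $D\,(m_{2u} - m_{1v}) + (m_1 D_v - m_2 D_u) = 0$.

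The remaining step is a bookkeeping computation: substitute $D = f m_3 + u m_1 + v m_2$ together with its partial derivatives $D_u = m_1 + u m_{1u} + v m_{2u} + f m_{3u}$ and $D_v = m_2 + u m_{1v} + v m_{2v} + f m_{3v}$, then expand and collect terms. Grouping by the monomials $u$, $v$, $f$ and verifying that the zeroth-order terms $m_1 m_2 - m_2 m_1$ cancel should reassemble exactly the claimed identity~\eqref{eq:perspectiveintegrabilityconstraint}. The converse direction is automatic, since every manipulation above is a reversible equivalence (multiplication by the nonvanishing $D^2$ and cancellation of the common factor), so the constraint on $\mathbf{m}$ holds if and only if $\mathbf{n}$ is integrable. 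The main obstacle is purely algebraic: the expansion of $m_1 D_v - m_2 D_u$ produces several cross terms, and one must be careful that the products $u m_1 m_{2v}$, $v m_2 m_{1u}$ and similar contributions combine correctly with the $D\,(m_{2u}-m_{1v})$ term so that the final expression is antisymmetric in the expected Wronskian-like pairs $m_i m_{ju} - m_{iu} m_j$. I expect no conceptual difficulty, only the need for careful sign tracking when collecting the coefficients of $u$, $v$ and $f$.
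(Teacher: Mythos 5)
Your proof is correct, and it takes a genuinely more elementary route than the paper's. Both arguments start identically, reducing integrability to $\hat{p}_v=\hat{q}_u$ via the gradient of the log-depth, but the paper then expands this equation in $p,q$, recasts the six resulting terms as scalar triple products $\mathbf{m}_v^\top(\mathbf{w}_1\times\mathbf{m})+\mathbf{m}_u^\top(\mathbf{w}_2\times\mathbf{m})$ with $\mathbf{w}_1=[0,-f,v]^\top$ and $\mathbf{w}_2=[-f,0,u]^\top$, multiplies by $m_3^3$, and invokes the orthogonality $(\mathbf{w}_i\times\mathbf{m})\perp\mathbf{m}$ to discard the terms carrying $m_{3u},m_{3v}$ times $\mathbf{m}$ --- a vectorial bookkeeping inherited from Papadhimitri and Favaro. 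You instead observe that $\hat{p}=-m_1/D$ and $\hat{q}=-m_2/D$ share the single common denominator $D=fm_3+um_1+vm_2$, so the whole proof collapses to one quotient-rule identity, $D\,(m_{2u}-m_{1v})+m_1D_v-m_2D_u=0$; expanding it as you indicate, the zeroth-order terms $m_1m_2$ cancel, the terms $u\,m_1m_{1v}$ cancel, and the coefficients of $u$, $v$ and $f$ regroup exactly into~\eqref{eq:perspectiveintegrabilityconstraint} (the paper's appendix in fact ends with the negation of that equation, which is of course equivalent). Your version buys a shorter, purely scalar computation and makes the ``iff'' transparent, the only division being by the nonvanishing $D^2$; your justification that $D\neq 0$ is loosely phrased but sound in substance, since $D=\rho\,(un_1+vn_2+fn_3)$ and, by the perspective normal formula~\eqref{eq:normal_perspective}, $un_1+vn_2+fn_3$ equals $-fz$ divided by a positive normalizer, which is nonzero because the depth $z$ is positive. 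The one implicit assumption you share with the paper is $m_3\neq 0$, needed in both proofs to define $p$ and $q$ and inherent to representing the surface as a graph over the image plane.
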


The integrability constraint~\eqref{eq:perspectiveintegrabilityconstraint} is slightly more complicated than the orthographic one~\eqref{eq:10}. Yet, this slight difference is of major importance, because the set of linear transformations $\mathbf{A}$ in~\eqref{eq:8} which preserve this condition is restricted to the identity matrix~\cite{Papadhimitri2013}. This means, under perspective projection and directional lighting the uncalibrated photometric stereo problem is well-posed. As we shall prove later in Section~\ref{sec:4.2}, such a result can actually be extended to more general lighting represented using first-order spherical harmonics. Let us now elaborate on such a modeling of general lighting, and characterize the ambiguities therein.

\section{Characterizing the ambiguities in uncalibrated photometric stereo under general lighting}
\label{sec:3}

The image formation model~\eqref{eq:1} is a simplified model, corresponding to the presence of a single light source located at infinity. However, this assumption is difficult to ensure in real-world experiments, and it would be more convenient to have at hand an image formation model accounting for general lighting (to handle multiple light sources, ambient lighting, etc.). 

\subsection{Spherical harmonics approximation of general lighting}

The most general image formation model for Lambertian surfaces would integrate the incident lighting received from all directions $\mathbf{u}_l \in \mathbb{S}^2$:
\begin{equation}
    I^i(x) = \rho(x) \int_{\mathbb{S}^2} s^i(x,\mathbf{u}_l)\,
    \max \lbrace \mathbf{n}(x)^\top \mathbf{u}_l,0 \rbrace
    \, \mathrm{d}\mathbf{u}_l, \qquad \forall x \in \Omega,
    \label{eq:13}
\end{equation}
where we denote $s^i(x,\mathbf{u}_l) \in \R$ the intensity of the light source in direction $\mathbf{u}_l \in \mathbb{S}^2$ at the surface point conjugate to pixel $x$ in the $i$-th image. In~\eqref{eq:13}, the $\max$ operator encodes self-shadows: it ensures that the amount of reflected light does not take negative values for surface elements not facing the light source.

Assuming a single light source illuminates the scene in the $i$-th image, and neglecting self-shadows, then Eq.~\eqref{eq:13} obviously comes down to the simplified model~\eqref{eq:1}. However, there exist other simplifications of the integral model~\eqref{eq:13}, which allow to handle more general illumination. Namely, the spherical harmonics approximation which were introduced simultaneously in \cite{Basri2003} and~\cite{Ramamoorthi2001}. In the present work we focus on first-order spherical harmonics approximation, which is known to capture approximately $87\%$ of general lighting~\cite{Frolova2004}. Using this approximation,~\eqref{eq:13} simplifies to (see the aforementioned papers for technical details):
\begin{equation}
    I^i(x) = \rho(x) \begin{bmatrix} 1 \\ \mathbf{n}(x) \end{bmatrix}^\top \mathbf{l}^i,\qquad \forall x \in \Omega,
    \label{eq:14}
\end{equation}
with $\mathbf{l}^i \in \R^4$ a vector representing the general illumination in the $i$-th image. Denoting $\mathbf{L} = \left [ \mathbf{l}^1,\hdots,\mathbf{l}^m \right]^\top \in \R^{m \times 4}$ the general lighting matrix, System (\ref{eq:14}) can be rewritten in the same form as the directional one~\eqref{eq:6}: 
\begin{align}
&\mathbf{i}(x) = \mathbf{L} \mathbf{m}(x),\qquad \forall x \in \Omega, \label{eq:15} \\
& ~ \text{with} \qquad  \mathbf{m}(x) = \rho(x) \begin{pmatrix}
1 \\
\mathbf{n}(x)
\end{pmatrix}.
\label{eq:16}
\end{align}

\subsection{Uncalibrated photometric stereo under first-order spherical harmonics lighting}

Uncalibrated photometric stereo under first-order spherical harmonics lighting comes down to solving the set of linear systems~\eqref{eq:15} in terms of both the general ligthing matrix $\mathbf{L}$ and the $\mathbf{m}$-field (which encodes albedo and surface normals). In the directional case discussed previously, this was possible only up to an invertible linear transformation, as shown by~\eqref{eq:8}. Despite appearing more complicated at first glance, the case of first-order spherical harmonics is actually slightly more favorable than the directional one: not all such linear transformations are acceptable, because they have to preserve the particular form of the $\mathbf{m}$-field, given in Eq.~\eqref{eq:16}. That is to say, given one $\mathbf{m}$-field solution and another one $\mathbf{m}^\ast=\mathbf{A} \mathbf{m}$ obtained by applying an invertible linear transformation $\mathbf{A} \in GL(4,\R)$, the entries $c_1, c_2, c_3, c_4$ of $\mathbf{m}^\ast$ should respect the constraint ${c_1}^2 = {c_2}^2 + {c_3}^2 + {c_4}^2$ over $\Omega$ (cf. Eq.~\eqref{eq:16}, remembering that each surface normal has unit length). 

As discussed in~\cite{Basri2007}, this means that ambiguities in uncalibrated photometric stereo under first-order spherical harmonics are characterized as follows:
\begin{equation}
    \mathbf{i}(x) =  \mathbf{L} \mathbf{m}(x) = \left(\mathbf{L} \mathbf{A}^{-1} \right)\left( \mathbf{A}  \mathbf{m}(x) \right), \qquad \forall (x,\mathbf{A}) \in \Omega \times L_s,
    \label{eq:17}
\end{equation}
where $L_s$ is the space of scaled Lorentz transformations defined by 
\begin{equation}
    L_s = \lbrace s\mathbf{A} \hspace{2mm} | \hspace{2mm} s \in \R\backslash \{0\} \text{ and } \mathbf{A} \in L \rbrace, 
    \label{eq:18}
\end{equation}
with $L$ the Lorentz group~\cite{Poincare1905} arising in Einstein's theory of special relativity~\cite{Einstein1905}:
\begin{align}
    & L = \lbrace \mathbf{A} \in GL(4,\R) \hspace{2mm} | \hspace{2mm} \forall \mathbf{x} \in \R^4, \hspace{2mm} l(\mathbf{A}\mathbf{x}) = l(\mathbf{x}) \rbrace,\\
    & \quad \text{~with~}\quad l : (t,x,y,z) \mapsto x^2 + y^2 + z^2 - t^2.
\end{align}

In spite of the presence of the scaled Lorentz ambiguity in Eq.~\eqref{eq:17}, several heuristical approaches to solve uncalibrated photometric stereo under general lighting have been proposed lately. Let us mention the approaches based on hemispherical embedding~\cite{Bartal2018} and on equivalent directional lighting~\cite{Mo2018}, which both deal with the case of orthographic projection, and the variational approach in~\cite{Haefner2019} for that of perspective projection. The empirically observed stability of such implementations tends to indicate that the problem might be better-posed than it seems, as already conjectured in~\cite{Basri2007}. In order to prove this conjecture, we will show in Section~\ref{sec:4} that not all scaled Lorentz transformations preserve the integrability of surface normals. To this end, we need to characterize algebraically a scaled Lorentz transformation.

 \subsection{Characterization of the scaled Lorentz transformation}
 
 We propose to characterize any ambiguity matrix $\mathbf{A} \in L_s$ in~\eqref{eq:17} by means of a scale factor $s \neq 0$ (one degree of freedom), a vector inside the unit $\R^3$-ball $\mathbf{v} \in B(\mathbf{0},1)$ (three degrees of freedom, where $B(\mathbf{0},1) = \left\{ \mathbf{x} \in \R^3,  \lvert x \rvert <1 \right\}$) and a 3D-rotation matrix $\mathbf{O} \in SO(3,\R)$ (three degrees of freedom, hence a total of seven). More explicitly, any scaled Lorentz transformation can be characterized algebraically as follows:
 
 \begin{theorem}
 For any scaled Lorentz transformation $\mathbf{A} \in L_s$, there exists a unique triple $(s,\mathbf{v}, \mathbf{O}) \in \R\backslash \{0\} \times B(\mathbf{0},1) \times SO(3,\R)$ such that: 
\begin{equation}
    \mathbf{A} =
s
\begin{pmatrix}
\epsilon_{1}(\mathbf{A}) \, \gamma & \epsilon_{1}(\mathbf{A}) \, \gamma \, \mathbf{v}^\top\mathbf{O}\\
 \\
\epsilon_{2}(\mathbf{A}) \,\gamma \, \mathbf{v} &\hspace{2mm} \epsilon_{2}(\mathbf{A})\,(\mathbf{I_{3}} + \frac{\gamma^{2}}{1 + \gamma} \mathbf{v} \mathbf{v}^\top) \mathbf{O}  
\end{pmatrix},
\end{equation}
with 
\begin{align}
   & \gamma = \frac{1}{\sqrt{1 - \lvert \mathbf{v} \rvert^2}}, \\
  &  \epsilon_1 (\mathbf{A}) = \left \{
   \begin{array}{r l}
     1 &    \text{~if~}  P_o(\mathbf{A}),  \\
     -1 &  \text{~else},
   \end{array}
   \right. \\
 &  \epsilon_2 (\mathbf{A}) = \left \{
   \begin{array}{r l}
     -1 &      \text{~if~} (P_p(\mathbf{A}) \land \overline{P_o(\mathbf{A})}) \lor (\overline{P_p(\mathbf{A})} \land P_o(\mathbf{A})),   \\
     1 & \text{~else},
   \end{array}
   \right.
\end{align}
and $P_p(\mathbf{A})$ stands for ``$\mathbf{A}$ is proper'', $P_o(\mathbf{A})$ for ``$\mathbf{A}$ is orthochronous'',
\label{thm:1}
\end{theorem}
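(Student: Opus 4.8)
The plan is to reduce the statement about the seven–dimensional set $L_s$ to the well–understood structure of the Lorentz group $L = O(3,1)$, and then to apply a polar (Cartan) decomposition on its identity component. Writing $\mathbf{J} = \mathrm{diag}(-1,1,1,1)$ for the matrix of the quadratic form $l$, membership $\mathbf{B}\in L$ reads $\mathbf{B}^\top\mathbf{J}\mathbf{B} = \mathbf{J}$. Since every $\mathbf{A}\in L_s$ is of the form $\mathbf{A} = s\mathbf{B}$ with $s\neq 0$ and $\mathbf{B}\in L$, the claimed identity is exactly $\mathbf{A} = s\,\mathbf{D}_{\epsilon}\,\Lambda(\mathbf{v})\,\mathbf{R}_{\mathbf{O}}$, where $\Lambda(\mathbf{v})$ is the pure boost
\begin{equation}
\Lambda(\mathbf{v}) = \begin{pmatrix} \gamma & \gamma\,\mathbf{v}^\top \\ \gamma\,\mathbf{v} & \mathbf{I_{3}} + \frac{\gamma^2}{1+\gamma}\,\mathbf{v}\mathbf{v}^\top \end{pmatrix}, \qquad \gamma = \frac{1}{\sqrt{1-\lvert\mathbf{v}\rvert^2}},
\end{equation}
$\mathbf{R}_{\mathbf{O}} = \mathrm{diag}(1,\mathbf{O})$ is a pure spatial rotation, and $\mathbf{D}_{\epsilon} = \mathrm{diag}(\epsilon_1,\epsilon_2,\epsilon_2,\epsilon_2)$ is a discrete sign factor. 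It therefore suffices to show that each $\mathbf{B}\in L$ factors uniquely as $\mathbf{D}_{\epsilon}\Lambda(\mathbf{v})\mathbf{R}_{\mathbf{O}}$, and then to track the scalar $s$.

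For existence, I would first use that $L$ has exactly four connected components, distinguished by $\det\mathbf{B}=\pm1$ (proper/improper) and by the sign of $B_{11}$ (orthochronous/non‑orthochronous), with the four diagonal sign matrices $\mathbf{D}_\epsilon\in\{\mathbf{I},\,\mathrm{diag}(1,-1,-1,-1),\,\mathrm{diag}(-1,1,1,1),\,-\mathbf{I}\}$ serving as one representative per component. A direct check of the four cases shows that the rule prescribed in the statement for $(\epsilon_1,\epsilon_2)$ places $\mathbf{D}_\epsilon$ in the same component as $\mathbf{B}$, so that $\mathbf{B}' := \mathbf{D}_\epsilon\mathbf{B}$ (using $\mathbf{D}_\epsilon^2=\mathbf{I}$) lies in the identity component $L_+^{\uparrow}$ (proper, orthochronous). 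I would then take the polar decomposition $\mathbf{B}' = \mathbf{S}\mathbf{R}$ with $\mathbf{S} = (\mathbf{B}'\mathbf{B}'^\top)^{1/2}$ symmetric positive–definite and $\mathbf{R} = \mathbf{S}^{-1}\mathbf{B}'$ orthogonal.

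The crux — and the step I expect to be the main obstacle — is to show that these Euclidean factors are again Lorentz transformations of the required special form. From $\mathbf{B}'\in L$ one derives $\mathbf{J}\mathbf{S}^2\mathbf{J} = \mathbf{S}^{-2}$, and uniqueness of the positive–definite square root upgrades this to $\mathbf{S}\mathbf{J}\mathbf{S} = \mathbf{J}$, i.e.\ $\mathbf{S}\in L$; being symmetric positive–definite with $\det\mathbf{S}=1$ and $S_{11}>0$, it lies in $L_+^{\uparrow}$. I would then prove the two structural lemmas on which everything rests: (i) a symmetric positive–definite element of $L$ is exactly a boost $\Lambda(\mathbf{v})$ for a unique $\mathbf{v}\in B(\mathbf{0},1)$, recovered from $S_{11}=\gamma$ and from the spatial part of the first column, equal to $\gamma\,\mathbf{v}$; and (ii) a matrix that is simultaneously orthogonal and Lorentz commutes with $\mathbf{J}$, hence is block–diagonal $\mathrm{diag}(\pm1,\mathbf{O})$ with $\mathbf{O}\in O(3)$, the sign and $\det\mathbf{O}$ being both forced to $+1$ because $\mathbf{R}=\mathbf{S}^{-1}\mathbf{B}'\in L_+^{\uparrow}$. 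This gives $\mathbf{S}=\Lambda(\mathbf{v})$ and $\mathbf{R}=\mathbf{R}_{\mathbf{O}}$, whence $\mathbf{A} = s\,\mathbf{D}_\epsilon\,\Lambda(\mathbf{v})\,\mathbf{R}_{\mathbf{O}}$.

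Uniqueness then follows by reading the parameters back off $\mathbf{A}$. Expanding the determinant gives $\det\mathbf{A} = s^4\,\epsilon_1\epsilon_2$, so $\lvert s\rvert = \lvert\det\mathbf{A}\rvert^{1/4}$, while $A_{11} = s\,\epsilon_1\,\gamma$ with $\gamma>0$ fixes $\mathrm{sign}(s)$ once $\epsilon_1$ is determined by $P_o(\mathbf{A})$; the pair $(\epsilon_1,\epsilon_2)$ is itself determined by $(P_p(\mathbf{A}),P_o(\mathbf{A}))$. With $s,\epsilon_1,\epsilon_2$ fixed, the factor $\mathbf{B}'=\mathbf{D}_\epsilon(\mathbf{A}/s)$ is determined, its polar decomposition is unique, and the bijections $\mathbf{v}\mapsto\Lambda(\mathbf{v})$ (onto the symmetric positive–definite elements of $L$) and $\mathbf{O}\mapsto\mathrm{diag}(1,\mathbf{O})$ make $\mathbf{v}$ and $\mathbf{O}$ unique. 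The genuine work is concentrated in the two lemmas above; the component bookkeeping and the recovery of $s$ are routine once the boost/rotation characterizations are established.
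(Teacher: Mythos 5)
Your proposal is correct, and it departs from the paper exactly at the core technical step. The component bookkeeping is the same in both: the paper (Propositions~2 and~3 of its Appendix~B, plus the proof of its Proposition~4) also multiplies by the sign representatives $\mathbf{T}$, $\mathbf{P}$, $\mathbf{TP}=-\mathbf{I}_4$ of the four connected components --- these are precisely your $\mathbf{D}_\epsilon$ --- to reduce to the proper orthochronous case, and then peels off the scale $s$ at the end. The difference is what happens on $L^p_o$: the paper simply \emph{cites} the boost--rotation decomposition $\mathbf{A}=\mathbf{S}(\mathbf{v})\mathbf{R}(\mathbf{O})$ with its uniqueness as a known result from Lorentz group theory (its Proposition~1, attributed to the literature), whereas you reprove it self-containedly from the Euclidean polar decomposition: you show the positive-definite factor is itself Lorentz via $\mathbf{J}\mathbf{S}^2\mathbf{J}=\mathbf{S}^{-2}$ and uniqueness of positive-definite square roots, and your two structural lemmas (symmetric positive-definite Lorentz matrices are exactly the boosts $\Lambda(\mathbf{v})$; orthogonal Lorentz matrices are exactly $\mathrm{diag}(\pm 1,\mathbf{O})$) are exactly the content of the cited proposition. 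What each buys: the paper's route is shorter but rests on the external reference; yours is elementary and makes uniqueness of $(\mathbf{v},\mathbf{O})$ fall out of uniqueness of the polar factors, and your explicit recovery of the scale ($\lvert s\rvert=\lvert\det\mathbf{A}\rvert^{1/4}$, sign from $A_{11}=s\,\epsilon_1\,\gamma$ with $\gamma>0$) is actually more careful than the paper's one-line ``combine Proposition~4 with the definition of $L_s$''. One point deserves a sentence in a full write-up, though it afflicts the paper's statement equally: the factorization $\mathbf{A}=s\mathbf{B}$ with $\mathbf{B}\in L$ is ambiguous under $(s,\mathbf{B})\mapsto(-s,-\mathbf{B})$, where $-\mathbf{B}$ has the same properness but opposite orthochronicity; your scheme silently resolves this because $\epsilon_1$ and $\epsilon_2$ both flip while $(\mathbf{v},\mathbf{O})$ is unchanged, so the stated triple is indeed unique, but this consistency check should be made explicit.
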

 where we recall that a Lorentz matrix $\mathbf{A}$ is ``proper'' iff it preserves the orientation of the Minkowski spacetime, and it is ``orthochronous'' iff it preserves the direction of the time, i.e.:
 \begin{align}
     \mathbf{A} \in L \text{~is proper~}  &\Longleftrightarrow \text{det}(\mathbf{A}) > 0, \\
     \mathbf{A} \in L \text{~is orthochronous~}  & \Longleftrightarrow \forall \mathbf{x} = \left[t,x,y,z\right]^\top \in \R^4, \text{sign}(t) = \text{sign}(t'), \nonumber \\
      & \qquad  \text{~where~} \mathbf{Ax} = \left[t',x',y',z'\right]^\top.   
 \end{align}
The opposites are improper and non-orthochronous, and we note $L^{p}_{o}, L^{i}_{o}, L^{p}_{n}$ and $L^{i}_{n}$ the sets of Lorentz transformations which are respectively proper and orthochronous, improper and orthochronous, proper and non-orthochronous, and improper and non-orthochronous. The Lorentz group is the union of all these spaces, i.e. $L = L^{p}_{o} \cup L^{i}_{o} \cup L^{p}_{n} \cup L^{i}_{n}$. 

Using Theorem \ref{thm:1} (whose proof can be found in \hyperref[sec:app1]{Appendix B}) to characterize the underlying ambiguity of uncalibrated photometric stereo under general lighting, we are ready to prove that imposing integrability disambiguates the problem.

\section{Integrability disambiguates uncalibrated photometric stereo under general lighting}
\label{sec:4}

As we have seen in the previous section, uncalibrated photometric stereo under general lighting is ill-posed without further constraints, since it is prone to a scaled Lorentz ambiguity, cf. Eq.~\eqref{eq:17}. Now, let us prove that not all scaled Lorentz transformations preserve the integrability of the underlying normal field. 

{We shall assume through the next two subsections that the pictured surface is twice differentiable and non-degenerate, in a sense which will be clarified in Section~\ref{sec:4.3}}. Then, the only acceptable Lorenz transformation is the one which globally exchanges concavities and convexities in the orthographic case, while it is the identity in the perspective case. That is to say, the orthographic case suffers only from a global concave$/$convex ambiguity, while the perspective one is well-posed.
\subsection{Orthographic case}
\label{sec:4.1}

First, let us prove that under orthographic projection and first-order spherical harmonics lighting, there are only two integrable solutions to uncalibrated photometric stereo, and they differ by a global  concave$/$convex transformation. 

To this end, we consider the genuine solution $\mathbf{m}(x)$ of~\eqref{eq:15} corresponding to a normal field  $\mathbf{n}(x)$ and albedo map $\rho(x)$, and another possible solution $\mathbf{m}^\ast(x) = \mathbf{A} \mathbf{m}(x),~\mathbf{A} \in L_s$, with $(\rho^\ast(x),\mathbf{n}^\ast(x))$ the corresponding albedo map and surface normals. The pictured surface being twice differentiable, the genuine normal field $\mathbf{n}$ is integrable by construction. We establish in this subsection that if the other candidate normal field $\mathbf{n}^\ast$ is assumed integrable as well, then both the genuine and the alternative solutions differ according to:
\begin{align}
    \left \{
   \begin{array}{r c l}
      \rho^\ast(x) & = & \alpha \, \rho_j(x) \\[4pt]
      n_{1}^\ast(x) & = & \lambda \, n_{1}(x)\\[4pt]
      n_{2}^\ast(x) & = & \lambda \, n_{2}(x)\\[4pt]
      n_{3}^\ast(x) & = & n_{3}(x)
   \end{array}
   \right.,\qquad \forall x \in \Omega,
   \label{ambiguityNormalFOOrth}
\end{align}
where $\alpha >0$ and $\lambda \in \{-1,1\}$. That is to say, all albedo values are globally scaled by the same factor $\alpha$, while the sign of the first two components of all normal vectors are inverted i.e., concavities are turned into convexities and vice-versa. The global scale on the albedo should not be considered as an issue, since such values are relative to the camera response function and the intensities of the light sources, and they can be manually scaled back in a post-processing step if needed. However, the residual global concave$/$convex ambiguity shows that shape inference remains ill-posed. Still, the ill-posedness is characterized by a single binary degree of freedom, which is to be compared with the three real degrees of freedom characterizing the GBR ambiguity arising in the case of directional lighting~\cite{Belhumeur1999}.

More formally, this result can be stated as the following theorem, which characterizes the scaled Lorentz transformations in~\eqref{eq:17} preserving the integrability of the underlying normal field:
\begin{theorem}
Under orthographic projection, the only scaled Lorentz transformation $\mathbf{A} \in L_s$ which preserves integrability of normals is the following one, where $\alpha >0$ and $\lambda \in \{-1,1\}$:
\begin{equation}
     \mathbf{A} = \alpha \begin{bmatrix} 
    1 & 0 & 0 & 0 \\
    0 & \lambda & 0 & 0 \\
    0 & 0 & \lambda & 0 \\
    0 & 0 & 0 & 1
    \end{bmatrix}.
\end{equation}
\label{thm:2}
\end{theorem}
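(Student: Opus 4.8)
The plan is to parametrize the candidate ambiguity $\mathbf{A}\in L_s$ by its scaled‑Lorentz decomposition from Theorem~\ref{thm:1}, write out $\mathbf{m}^\ast=\mathbf{A}\mathbf{m}$ componentwise, and convert the requirement that \emph{both} $\mathbf{n}$ and $\mathbf{n}^\ast$ be integrable into algebraic constraints on the triple $(s,\mathbf{v},\mathbf{O})$. Writing $\mathbf{m}=\rho\,[1,\mathbf{n}^\top]^\top$ as in~\eqref{eq:16}, the factor $\rho$ cancels in every ratio $m_k^\ast/m_3^\ast$, so the transformed gradient ratios $P^\ast:=m_1^\ast/m_3^\ast$ and $Q^\ast:=m_2^\ast/m_3^\ast$ depend on $x$ only through the unit normal $\mathbf{n}(x)$, i.e. only through $p:=n_1/n_3$ and $q:=n_2/n_3$ (recall that $n_3<0$ is fixed in sign). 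Concretely, the spatial part of $\mathbf{A}[1,\mathbf{n}^\top]^\top$ is proportional to $\gamma\mathbf{v}+(\mathbf{I}_3+\tfrac{\gamma^2}{1+\gamma}\mathbf{v}\mathbf{v}^\top)\mathbf{O}\,\mathbf{n}$, so $P^\ast$ and $Q^\ast$ are explicit functions of $(p,q)$.

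Next I would impose integrability of $\mathbf{n}^\ast$ through the orthographic constraint~\eqref{eq:10}, i.e. $P^\ast_v=Q^\ast_u$, and expand it with the chain rule using $P^\ast=P^\ast(p,q)$, $Q^\ast=Q^\ast(p,q)$. Substituting the genuine integrability $p_v=q_u$ (which holds because $\mathbf{n}$ is a true Monge‑patch normal) and collecting terms yields the pointwise identity $-\,Q^\ast_p\,p_u+P^\ast_q\,q_v+(P^\ast_p-Q^\ast_q)\,p_v=0$. Here the three second‑order quantities $p_u=-z_{uu}$, $q_v=-z_{vv}$ and $p_v=q_u=-z_{uv}$ are exactly the independent entries of the Hessian of $z$; the non‑degeneracy hypothesis (Section~\ref{sec:4.3}) guarantees that they may be prescribed independently over an open range of values of $(p,q)$. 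Hence each coefficient must vanish identically, giving $Q^\ast_p=0$, $P^\ast_q=0$ and $P^\ast_p=Q^\ast_q$. The first two say $P^\ast=P^\ast(p)$ and $Q^\ast=Q^\ast(q)$; the third then forces a common constant derivative, so $P^\ast=c\,p+a$ and $Q^\ast=c\,q+b$ for some constants with $c\neq0$ by invertibility.

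The decisive step is to feed this affineness back into the explicit expressions. Writing $\mathbf{n}=n_3\,[p,q,1]^\top$ with $n_3=-1/\sqrt{1+p^2+q^2}=:-1/\sigma$, each spatial component of $\mathbf{A}[1,\mathbf{n}^\top]^\top$ is proportional to $\gamma v_k-b_k/\sigma$ with $b_k$ affine in $(p,q)$, so $P^\ast=(\gamma v_1\sigma-b_1)/(\gamma v_3\sigma-b_3)$. Cross‑multiplying $P^\ast=cp+a$ separates into a term carrying the irrational factor $\sigma=\sqrt{1+p^2+q^2}$ and a polynomial term; since $\sigma$ is not rational in $(p,q)$, both must vanish, and the $\sigma$‑term gives $v_1-(cp+a)v_3\equiv0$, forcing $v_1=v_3=0$ (otherwise $c=0$). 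The analogous computation for $Q^\ast$ gives $v_2=v_3=0$, hence $\mathbf{v}=\mathbf{0}$ and $\gamma=1$, $\mathbf{K}=\mathbf{I}_3$. Then $P^\ast,Q^\ast$ reduce to Möbius ratios of the linear forms $(\mathbf{O}[p,q,1]^\top)_k$; affineness kills the off‑diagonal entries of the last row of $\mathbf{O}$ and equates its first two diagonal entries, after which orthonormality and $\det\mathbf{O}=1$ pin down $\mathbf{O}=\mathrm{diag}(\lambda,\lambda,1)$ with $\lambda\in\{-1,1\}$. Finally, positivity of the albedos $\rho,\rho^\ast$ together with the visibility condition $n_3,n_3^\ast<0$ force $\epsilon_1(\mathbf{A})=\epsilon_2(\mathbf{A})$ and $\alpha:=s\,\epsilon_1(\mathbf{A})>0$, collapsing $\mathbf{A}$ to $\alpha\,\mathrm{diag}(1,\lambda,\lambda,1)$ and recovering~\eqref{ambiguityNormalFOOrth}.

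\textbf{Main obstacle.} The crux is the reduction in the second paragraph: one must argue rigorously that preserving integrability over the whole non‑degenerate surface is equivalent to the pointwise functional identities $Q^\ast_p=P^\ast_q=0$, $P^\ast_p=Q^\ast_q$, which hinges on the Hessian entries of $z$ being genuinely free parameters — this is precisely what the non‑degeneracy hypothesis must encode. The subsequent elimination of the boost $\mathbf{v}$ via the irrationality of $\sqrt{1+p^2+q^2}$ is the main computational hurdle; everything after $\mathbf{v}=\mathbf{0}$ is routine linear algebra on $SO(3)$ plus the orientation/positivity bookkeeping.
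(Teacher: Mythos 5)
Your overall strategy is genuinely different from the paper's: you work in the gradient variables, writing the transformed ratios $P^\ast,Q^\ast$ as explicit functions of $(p,q)$ (correct, since $\rho$ cancels in the ratios of components of $\mathbf{m}^\ast=\rho\,\mathbf{A}[1,\mathbf{n}^\top]^\top$), derive the chain-rule identity $-Q^\ast_p\,p_u+P^\ast_q\,q_v+(P^\ast_p-Q^\ast_q)\,p_v=0$ after substituting $p_v=q_u$, and then eliminate the parameters of Theorem~\ref{thm:1} analytically. The paper never passes to $(p,q)$: it expands the integrability of $\mathbf{m}^\ast$ into a relation that is \emph{linear} in the $2\times 2$ minors $A^{i,j}_{k,l}$ of $\mathbf{A}$, with coefficients the known functions $c^{i,j}_k$ (Eq.~\eqref{eq:minorsOrtho}); after substituting the genuine integrability $c^{2,4}_v=c^{3,4}_u$, non-degeneracy — \emph{defined} as linear independence of the remaining eleven functions — forces all the minors in $\mathbf{a}$ to vanish, and Corollary~\ref{cor:1} (the GBR characterization by minors), the decomposition of Theorem~\ref{thm:1}, and orthogonality of $\mathbf{O}$ finish the argument. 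Your endgame — the $\sqrt{1+p^2+q^2}$-irrationality trick killing the boost $\mathbf{v}$, $c\neq0$ by invertibility, affineness of the Möbius ratios reducing $\mathbf{O}$ to $\mathrm{diag}(\lambda,\lambda,1)$, and the positivity/orientation bookkeeping giving $\epsilon_1(\mathbf{A})=\epsilon_2(\mathbf{A})=\mathrm{sign}(s)$ — is correct and matches the paper's conclusion, including Eq.~\eqref{ambiguityNormalFOOrth}.

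However, the step you yourself flag as the crux is a genuine gap, not merely an obstacle to be smoothed over: the passage from the single pointwise identity to the three functional identities $Q^\ast_p\equiv0$, $P^\ast_q\equiv0$, $P^\ast_p\equiv Q^\ast_q$. You justify it by asserting that the non-degeneracy hypothesis of Section~\ref{sec:4.3} lets the Hessian entries ``be prescribed independently over an open range of $(p,q)$.'' It does not: that section defines degeneracy as a linear dependence among the eleven functions $c^{i,j}_k$ over $\Omega$ — a condition tailored exactly to the minor-based linear system $\mathbf{i}_o(x)^\top\mathbf{a}=0$ of Eq.~\eqref{eq:uneLigne} — and says nothing about the Gauss map having open image or about distinct points sharing a gradient value with spanning Hessian triples. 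For a \emph{fixed} surface nothing can be prescribed: generically each value $(p_0,q_0)$ is attained at isolated points, so your identity yields one linear equation per gradient value, a codimension-one constraint on the coefficient functions rather than their vanishing. (The free-Hessian argument is the classical one used to derive the GBR ambiguity, but there it is legitimate because the transformation must preserve integrability of \emph{all} integrable fields; the theorem here, like the paper's proof, concerns one given surface.) To repair your route you would either have to assume this stronger genericity outright — changing the theorem's hypothesis — or recast the identity as a linear system in the finitely many unknowns of $\mathbf{A}$ and argue rank over $\Omega$, which is precisely what the paper's formulation in terms of minors accomplishes. Everything downstream of the functional identities is sound.
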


\begin{proof}
Let $\mathbf{m}:\,\Omega \to \R^4$ a field with the form of Equation~\eqref{eq:16}, and let $\rho$ and $\mathbf{n}$ the corresponding albedo map and normal field, assumed integrable. The normal field $\mathbf{n}$ being integrable, $p  = -\dfrac{n_1}{n_3}$ and $q = -\dfrac{n_2}{n_3}$ satisfy the integrability constraint $p_v = q_u$ over $\Omega$. Denoting by $(c_1,c_2,c_3,c_4)$ the four components of the field $\mathbf{m}$, and using the expression~\eqref{eq:16} of $\mathbf{m}$, this implies: 
\begin{align}
 & \left(\dfrac{c_{2}}{c_{4}}\right)_v = \left(\dfrac{c_{3}}{c_{4}}\right)_u \qquad \text{over~} \Omega, \nonumber\\[1pt]
\iff & \dfrac{c_{2v}c_4 - c_2c_{4v}}{{c_4}^2} = \dfrac{c_{3u}c_4 - c_3c_{4u}}{{c_4}^2} \qquad \text{over~} \Omega, \nonumber\\[1pt]
\iff & {(c_{2v} - c_{3u})c_4 + c_{4u}c_3 - c_{4v}c_2 = 0 \qquad \text{over~} \Omega.}
\label{eq:integConstraintOrtho}
\end{align}

Let $\mathbf{m}^\ast = \mathbf{A} \mathbf{m}$, with $\mathbf{A}$ a scaled Lorentz transformation having the form given by Theorem~\ref{thm:1}, and let $\rho^\ast$ and $\mathbf{n}^\ast$ the corresponding albedo map and normal field, assumed integrable. The same rationale as above on the alternative normal field $\mathbf{n}^\ast$ yields:
\begin{equation}
\label{eq:integConstraintOrthoDouble}
      (c_{2v}^\ast - c_{3u}^*)c_4^* + c_{4u}^*c_3^* - c_{4v}^*c_2^* = 0 \qquad \text{over~} \Omega.
\end{equation}
Since $\mathbf{m}^* = \mathbf{A}\mathbf{m}$,~\eqref{eq:integConstraintOrthoDouble} writes as:
\begin{multline}
\left(A_{21}c_{1v} + A_{22}c_{2v} + A_{23}c_{3v} + A_{24}c_{4v} - A_{31}c_{1u} - A_{32}c_{2u} \right.\\
 \left.- A_{33}c_{3u} - A_{34}c_{4u} \right)\left(A_{41}c_{1} + A_{42}c_{2} + A_{43}c_{3} + A_{44}c_{4}\right)\\
+ \left(A_{41}c_{1u} + A_{42}c_{2u} + A_{43}c_{3u} + A_{44}c_{4u}\right)\left(A_{31}c_{1} + A_{32}c_{2} + A_{33}c_{3} + A_{34}c_{4}\right) \\
- \left(A_{41}c_{1v} + A_{42}c_{2v} + A_{43}c_{3v}  + A_{44}c_{4v}\right)\left(A_{21}c_{1} + A_{22}c_{2} + A_{23}c_{3} + A_{24}c_{4}\right) \\
= 0 \quad \text{~over~} \Omega.
\label{eq:eezazeaeqd}
\end{multline}
Let us introduce the following notation, $1 \leq i < j \leq 4$, and $k \in \lbrace u,v \rbrace$:  
\begin{equation}
\label{notationVect}
c^{i,j}_k(x) = c_{j}(x) c_{ik}(x) - c_{i}(x) c_{jk}(x),\qquad \forall x \in \Omega,
\end{equation}
and denote as follows the minors of size two of matrix $\mathbf{A}$: 
\begin{equation}
A^{i,j}_{k,l} =  A_{ij}A_{kl} - A_{kj}A_{il},\qquad 1 \leq i < k \leq 4,~1 \leq j < l \leq 4.
\end{equation}
 
Then, factoring~\eqref{eq:eezazeaeqd} firstly by the coefficients $A_{ij}$ and after by $c^{i,j}_u$ and $c^{i,j}_v$ for every $(i,j) \in \lbrace {1,2,3,4} \rbrace$ with $i<j$, we get: 
\begin{multline}
    c^{1,2}_v A^{2,1}_{4,2} +   c^{1,3}_v A^{2,1}_{4,3} +   c^{1,4}_v A^{2,1}_{4,4} +  c^{2,3}_v A^{2,2}_{4,3}
     + c^{2,4}_v A^{2,2}_{4,4} + c^{3,4}_v A^{2,3}_{4,4} \\
- c^{1,2}_u A^{3,1}_{4,2}  -  c^{1,3}_u A^{3,1}_{4,3} -  c^{1,4}_u A^{3,1}_{4,4} -  c^{2,3}_u A^{3,2}_{4,3} -  c^{2,4}_u A^{3,2}_{4,4} -  c^{3,4}_u A^{3,3}_{4,4} = 0 \text{~over~}\Omega.
    \label{eq:minorsOrtho}
\end{multline}

In addition,~\eqref{eq:integConstraintOrtho} also writes as:
\begin{equation}
    c^{2,4}_v = c^{3,4}_u {\qquad \text{~over~} \Omega.}
\end{equation}
Thus, substituting $c^{2,4}_v$ by $c^{3,4}_u$, Eq.~\eqref{eq:minorsOrtho} can be rewritten as:
\begin{equation}
    \mathbf{i}_o(x)^\top \mathbf{a} = 0, \qquad \forall x \in \Omega,
    \label{eq:uneLigne}
\end{equation}
where $\mathbf{i}_o(x) \in \R^{11}$ is the ``orthographic integrability vector'' containing factors $c^{i,j}_u(x)$ and $c^{i,j}_v(x)$, and $\mathbf{a} \in \R^{11}$ contain the minors $A^{i,j}_{k,l}$ of $\mathbf{A}$ appearing in~\eqref{eq:minorsOrtho}. 

Since the surface is assumed to be non-degenerate (cf. Section~\ref{sec:4.3}), there exist at least  $11$ points $x \in \Omega$ such that a full-rank matrix can be formed by concatenating the vectors $\mathbf{i}_o(x)^\top$ row-wise. We deduce that the only solution to~\eqref{eq:uneLigne} is $\mathbf{a}=0$, which is equivalent to the following equations: 
\begin{equation}
\left \{
   \begin{array}{l}
    	A^{3,2}_{4,3} = A^{3,2}_{4,4} = A^{2,3}_{4,4} = A^{2,2}_{4,3} = 0,\\[5pt]
     	A^{3,3}_{4,4} = A^{2,2}_{4,4},\\[5pt]
	A^{2,1}_{4,2} = A^{3,1}_{4,2} = A^{2,1}_{4,3} = A^{3,1}_{4,3} = A^{2,1}_{4,4} = A^{3,1}_{4,4} = 0.
   \end{array}
   \right .
   \label{eq:premierSysteme}
\end{equation}

According to Corollary \ref{cor:1} provided in \hyperref[sec:app2]{Appendix C}, this implies that the submatrix of $\mathbf{A}$ formed by its last three rows and columns is a scaled generalized bas-relief transformation, i.e.:
{there exists a unique quadruple} $ (\lambda, \mu, \nu, \beta) \in \R^4$ with $\lambda \neq 0, \beta \neq 0$, such that:
\begin{equation}
 \mathbf{A} = \begin{pmatrix}
A_{11} & A_{12} & A_{13} & A_{14} \\
A_{21} & \beta\lambda & 0 & -\beta\mu \\
A_{31} & 0 & \beta\lambda & -\beta\nu   \\
A_{41} & 0 & 0 & \beta 
\end{pmatrix}.
\end{equation}
By taking into account the last equation of System~\eqref{eq:premierSysteme}, we get :
\begin{align}
\label{eq:expressionIntermediaireAOrtho}
\mathbf{A} = \begin{pmatrix}
A_{11} & A_{12} & A_{13} & A_{14} \\
0 & \beta\lambda & 0 & -\beta\mu \\
0 & 0 & \beta\lambda & -\beta\nu   \\
0 & 0 & 0 & \beta 
\end{pmatrix}.
\end{align}
Identifying~\eqref{eq:expressionIntermediaireAOrtho} with the expression in Theorem~\ref{thm:1}, $\mathbf{v} = \mathbf{0}$, $\gamma=1$ and $s\,\epsilon_2(\mathbf{A})\,\mathbf{O}=\beta\begin{pmatrix}
\lambda & 0 & -\mu \\
 0 & \lambda & -\nu   \\
0 & 0 & 1 
\end{pmatrix}$. In addition, $\mathbf{O} \in SO(3,\R)$, which implies $\mathbf{O}^\top\mathbf{O} = \mathbf{I_3}$. Thus, since $\epsilon_{2}(\mathbf{A})^2 = 1$, we have: $(s\,\epsilon_{2}(\mathbf{A})\,\mathbf{O})^\top\,\epsilon_{2}(\mathbf{A})\,\mathbf{O}) = s^2\, \mathbf{I_3}$. Equivalently:
\begin{align}
   & \beta\,\begin{pmatrix}
\lambda & 0 & 0 \\
 0 & \lambda & 0   \\
-\mu & -\nu & 1
\end{pmatrix}\,
\beta\,\begin{pmatrix}
\lambda & 0 & -\mu \\
 0 & \lambda & -\nu   \\
0 & 0 & 1  
\end{pmatrix}
 =
\begin{pmatrix}
s^2 & 0 & 0 \\
 0 & s^2 & 0   \\
0 & 0 & s^2 
\end{pmatrix}, \nonumber\\
\iff & \beta^2\begin{pmatrix}
\lambda^2 & 0 & -\mu \lambda \\
 0 & \lambda^2 & -\nu \lambda   \\
-\mu \lambda & -\nu \lambda & \mu^{2} + \nu^{2} + 1 
\end{pmatrix}
 = 
\begin{pmatrix}
s^2 & 0 & 0 \\
 0 & s^2 & 0   \\
0 & 0 & s^2 
\end{pmatrix},
\end{align}
which implies $\lambda^2 = 1, \mu = 0, \nu = 0, \beta^2 = s^2$.

Finally, $\text{det}(s\, \epsilon_{2}(\mathbf{A})\,\mathbf{O}) = \beta \, \lambda^2  \, \beta = \epsilon_{2}(\mathbf{A}) \, s$, thus according to \eqref{eq:expressionIntermediaireAOrtho}:
\begin{align}
\label{eq:expressionACasiFinaleOrtho}
\mathbf{A} = 
s\begin{pmatrix}
\epsilon_1(\mathbf{A}) & 0 & 0 & 0 \\
0 & \epsilon_2(\mathbf{A})\lambda & 0 & 0 \\
0 & 0 & \epsilon_2(\mathbf{A})\lambda & 0  \\
0 & 0 & 0 & \epsilon_2(\mathbf{A})
\end{pmatrix}.
\end{align}

Plugging~\eqref{eq:expressionACasiFinaleOrtho} into $\mathbf{m}^\ast = \mathbf{A} \mathbf{m}$, we obtain:
\begin{align}
    \begin{pmatrix}
    \rho^*(x) \\
    \rho^*(x) \, n_{1}^*(x) \\
    \rho^*(x) \, n_{2}^*(x) \\
    \rho^*(x) \, n_{3}^*(x) 
    \end{pmatrix} = s \begin{pmatrix}
    \epsilon_1(\mathbf{A}) \, \rho(x) \\
    \epsilon_2(\mathbf{A}) \, \lambda \, \rho(x) \,  n_{1}(x) \\
    \epsilon_2(\mathbf{A}) \, \lambda \, \rho(x) \,  n_{2}(x) \\
    \epsilon_2(\mathbf{A}) \, \rho(x) \,  n_{3}(x) 
    \end{pmatrix}, \qquad \forall x \in \Omega.
    \label{eq:equationsSignesOrtho}
\end{align}
Now, knowing that albedos $\rho,\rho^* \geqslant 0$ (they represent the proportion of light which is reflected by the surface), and that the last component of normals $n_{3},n^*_{3} \leq 0$ (the normals point toward the camera), Eq.~\eqref{eq:equationsSignesOrtho} implies that $\epsilon_1(\mathbf{A})$ and $\epsilon_2(\mathbf{A})$ have exactly the same sign as $s$. 

Two cases must be considered. If $s > 0$, then $\epsilon_1(\mathbf{A}) = \epsilon_2(\mathbf{A}) = 1$, and plugging these values into~\eqref{eq:expressionACasiFinaleOrtho} we obtain the expression provided in Theorem~\ref{thm:2}. If $s < 0$, then $\epsilon_1(\mathbf{A}) = \epsilon_2(\mathbf{A}) = -1$, and we again get the expression provided in Theorem~\ref{thm:2}. 
\end{proof}


From a practical point of view, once an integrable normal field candidate has been found heuristically, using e.g. hemispherical embedding~\cite{Bartal2018} or an equivalent directional lighting model~\cite{Mo2018}, the residual ambiguity i.e., the sign of $\lambda$, needs to be set manually, as proposed for instance in in~\cite{Mo2018}. As we shall see now, in the case of perspective projection the problem becomes even completely well-posed, which circumvents the need for any manual intervention. 

\newpage

\subsection{Perspective case}
\label{sec:4.2}

Now we will prove that uncalibrated photometric stereo under first-order spherical harmonics lighting and perspective projection is well-posed. This means, imposing integrability restricts the admittible ambiguity matrices $\mathbf{A}$ in~\eqref{eq:17} to the identity matrix (up to a factor scaling all albedo values without affecting the geometry):
\begin{theorem}
\label{thm:3}
Under perspective projection, the only scaled Lorentz transformation $\mathbf{A} \in L_s$ which preserves integrability of normals is the identity matrix, up to a scale factor $\alpha > 0$:
\begin{equation}
    \mathbf{A} = \alpha \mathbf{I}_4.
\end{equation}
\end{theorem}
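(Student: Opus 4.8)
The plan is to follow the architecture of the proof of Theorem~\ref{thm:2} verbatim, only substituting the perspective integrability constraint of Proposition~\ref{cor:2} for its orthographic analogue~\eqref{eq:integConstraintOrtho}. Writing the components of $\mathbf{m}$ as $(c_1,c_2,c_3,c_4)$ and recalling that its last three components equal $\rho\,\mathbf{n}$ (cf.~\eqref{eq:16}), the identification $m_1=c_2,\,m_2=c_3,\,m_3=c_4$ turns~\eqref{eq:perspectiveintegrabilityconstraint} into
\begin{equation}
u\,c^{2,3}_u + v\,c^{2,3}_v + f\,c^{2,4}_v - f\,c^{3,4}_u = 0 \qquad \text{over } \Omega,
\label{eq:planPerspGenuine}
\end{equation}
using the bilinear shorthand $c^{i,j}_k$ of~\eqref{notationVect}; the candidate field $\mathbf{m}^\ast=\mathbf{A}\mathbf{m}$ must satisfy the same identity in the starred quantities.

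The engine of the argument is the transformation rule already implicit in the passage from~\eqref{eq:integConstraintOrthoDouble} to~\eqref{eq:minorsOrtho}, namely that the brackets $c^{i,j}_k$ are quadratic covariants of $\mathbf{m}$ which transform through the order-two minors of $\mathbf{A}$: $c^{i,j\ast}_k=\sum_{1\le p<q\le 4}A^{i,p}_{j,q}\,c^{p,q}_k$. I would substitute this into the starred version of~\eqref{eq:planPerspGenuine}, then use~\eqref{eq:planPerspGenuine} itself to eliminate $c^{2,4}_v$, arriving at a single scalar relation $\mathbf{i}_p(x)^\top\mathbf{a}_p=0$ over $\Omega$, exactly as in~\eqref{eq:uneLigne}. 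The decisive difference with the orthographic case is that the ``perspective integrability vector'' $\mathbf{i}_p(x)$ now collects not only the brackets $c^{p,q}_u(x),c^{p,q}_v(x)$ but also their products with the pixel coordinates $u,v$. Since $u$ and $v$ are non-constant over $\Omega$, these coordinate-weighted brackets are genuinely independent functions, so the non-degeneracy hypothesis of Section~\ref{sec:4.3} forces strictly more minors of $\mathbf{A}$ to vanish than the orthographic system~\eqref{eq:premierSysteme} did. This extra batch of equations is precisely the mechanism by which perspective kills the residual ambiguity left open under orthography.

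Reading off the coefficients of $\mathbf{i}_p(x)$ I expect to obtain that every order-two minor of $\mathbf{A}$ supported on two of the rows $\{2,3,4\}$ vanishes except the three ``diagonal'' ones, together with the equalities $A^{2,2}_{3,3}=A^{2,2}_{4,4}=A^{3,3}_{4,4}$. Interpreting these as statements about the bivectors $\mathbf{r}_i\wedge\mathbf{r}_j$ spanned by rows $i,j\in\{2,3,4\}$, and using that $\mathbf{A}\in L_s$ is invertible so that these rows are linearly independent (each bivector is therefore nonzero), each $\mathbf{r}_i\wedge\mathbf{r}_j$ must be a nonzero multiple of a single coordinate bivector $\mathbf{e}_i\wedge\mathbf{e}_j$. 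This forces $\mathrm{span}(\mathbf{r}_i,\mathbf{r}_j)=\mathrm{span}(\mathbf{e}_i,\mathbf{e}_j)$ for each pair, hence rows $2,3,4$ are supported on columns $2,3,4$ respectively; the submatrix on those rows is diagonal, and the three equalities give $A_{22}=A_{33}=A_{44}$. In particular the first column of rows $2,3,4$ vanishes, which plays here the role that Corollary~\ref{cor:1} plays in the orthographic proof.

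It then remains to match this structure with the scaled-Lorentz form of Theorem~\ref{thm:1}. The vanishing lower-left block $(A_{21},A_{31},A_{41})^\top=s\,\epsilon_2(\mathbf{A})\,\gamma\,\mathbf{v}$ forces $\mathbf{v}=\mathbf{0}$ and $\gamma=1$; the upper-right block then also vanishes, so $\mathbf{A}$ is genuinely diagonal, and the lower-right block $s\,\epsilon_2(\mathbf{A})\,\mathbf{O}=A_{22}\,\mathbf{I_{3}}$ forces $\mathbf{O}=\mathbf{I_{3}}$. Thus $\mathbf{A}=s\,\mathrm{diag}\!\left(\epsilon_1(\mathbf{A}),\epsilon_2(\mathbf{A}),\epsilon_2(\mathbf{A}),\epsilon_2(\mathbf{A})\right)$, and the same physical sign constraints invoked at the end of the proof of Theorem~\ref{thm:2} ($\rho,\rho^\ast>0$ and $n_3,n_3^\ast\le 0$) give $s\,\epsilon_1(\mathbf{A})>0$ and $s\,\epsilon_2(\mathbf{A})>0$, whence $\epsilon_1(\mathbf{A})=\epsilon_2(\mathbf{A})$ and $\mathbf{A}=\alpha\,\mathbf{I}_4$ with $\alpha=s\,\epsilon_1(\mathbf{A})>0$. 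I expect the main obstacle to be the bookkeeping of the second step: correctly expanding the minor-weighted sum, carrying out the elimination of $c^{2,4}_v$, and above all verifying that the enlarged, coordinate-weighted integrability vector $\mathbf{i}_p(x)$ is non-degenerate, so that all the minors listed above are genuinely forced to zero.
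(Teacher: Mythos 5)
Your proposal is correct, and its overall architecture coincides with the paper's proof: the same bracket-transformation rule (your formula $(c^\ast)^{i,j}_k=\sum_{p<q}A^{i,p}_{j,q}c^{p,q}_k$ is exactly what underlies the passage from~\eqref{eq:compactIntegrabilityPerspective} to~\eqref{minorsPersp}), the same elimination of one grouped function via the genuine constraint (the paper eliminates $u c^{2,3}_u+v c^{2,3}_v$ rather than $f c^{2,4}_v$, which is equivalent), the same non-degeneracy argument forcing all $17$ coefficients to zero, and your reading of the resulting system matches~\eqref{eq:systemMinorsPersp} exactly: all minors on row pairs from $\{2,3,4\}$ vanish except the three diagonal ones, which are equal. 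Where you genuinely depart from the paper is the extraction of the matrix structure from these vanishing minors. The paper feeds the first three equations of~\eqref{eq:systemMinorsPersp} into Corollary~\ref{cor:1} (the scaled-GBR characterization of Appendix~C, which rests on Propositions~\ref{prop:5}--\ref{prop:7}), obtaining a GBR-structured lower-right block with free entries $A_{21},A_{31},A_{41}$, and only then uses the remaining equations to force $\lambda=1$, $\mu=\nu=0$ and the vanishing first column. You instead interpret the vanishing minors as Plücker coordinates of the row bivectors $\mathbf{r}_i\wedge\mathbf{r}_j$, $i,j\in\{2,3,4\}$, and use the invertibility of $\mathbf{A}\in L_s$ (so each bivector is nonzero and determines its $2$-plane) to conclude directly that $\mathbf{r}_2=A_{22}\mathbf{e}_2$, $\mathbf{r}_3=A_{33}\mathbf{e}_3$, $\mathbf{r}_4=A_{44}\mathbf{e}_4$ with $A_{22}=A_{33}=A_{44}$ from the diagonal equalities. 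This is self-contained, bypasses the GBR detour entirely, and obtains the vanishing of $A_{21},A_{31},A_{41}$ in the same stroke; the paper's route buys reuse of machinery shared with the orthographic Theorem~\ref{thm:2}. Your endgame (identification with Theorem~\ref{thm:1}: $\mathbf{v}=\mathbf{0}$, $\gamma=1$, $\mathbf{O}=\mathbf{I_3}$ since a scalar multiple of the identity in $SO(3,\R)$ must have scalar one, followed by the sign constraints $\rho,\rho^\ast>0$ and $n_3,n_3^\ast\leq 0$) is the same as the paper's, which invokes ``the same arguments as those used around Eq.~\eqref{eq:expressionACasiFinaleOrtho}''. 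One bookkeeping caution when you flesh out the second step: perform the elimination at the level of the grouped basis function $f\,c^{2,4}_v$, not of every raw occurrence of the bracket $c^{2,4}_v$, since $c^{2,4}_v$ also appears inside the coordinate-weighted group $u\,c^{2,4}_u+v\,c^{2,4}_v$ (with coefficient $A^{2,2}_{3,4}$); substituting there would generate functions such as $uv\,c^{2,3}_u$ lying outside the $17$-member family for which the non-degeneracy condition of Section~\ref{sec:4.3} is formulated. This is a matter of phrasing, not a gap.
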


\begin{proof}
Let $\mathbf{m}:\,\Omega \to \R^4$ a field with the form of Equation~\eqref{eq:16}, whose normal field is integrable. Let $\mathbf{m}^\ast = \mathbf{A} \mathbf{m}$ another such field whose normal field is integrable, with $\mathbf{A} \in L_s$ a scaled Lorentz transformation having the form given by Theorem~\ref{thm:1}.

Let us denote by $(c_1,c_2,c_3,c_4)$ the four components of the field $\mathbf{m}$, and by $(c_1^\ast,c_2^\ast,c_3^\ast,c_4^\ast)$ those of $\mathbf{m}^\ast$. According to Proposition \ref{cor:2}, the integrability constraint of the normal field associated with $\mathbf{m}^\ast$ writes as follows: 
\begin{equation}
\label{eq:compactIntegrabilityPerspective}
    u (c^\ast)^{2,3}_u + v (c^\ast)^{2,3}_v + f (c^\ast)^{2,4}_v - f (c^\ast)^{3,4}_u = 0 { \qquad \text{~over~}\Omega,} 
\end{equation}
with the same notations as in~\eqref{notationVect}.

As in the previous proof of Theorem \ref{thm:2}, we substitute in the integrability constraint~\eqref{eq:compactIntegrabilityPerspective} the entries of $\mathbf{m}^\ast = \mathbf{A} \mathbf{m}$ with their expressions in terms of entries of $\mathbf{A}$ and $\mathbf{m}$. Then, by factoring firstly by the coefficients $A_{ij}$ and then by $c^{i,j}_u$ and $c^{i,j}_v$ for every $(i,j) \in \lbrace {1,2,3,4} \rbrace$ with $i<j$, we get:
\begin{align}
    & \left(u c^{1,2}_u + v c^{1,2}_v\right) A^{2,1}_{3,2} + \left(u c^{1,3}_u + v c^{1,3}_v\right) A^{2,1}_{3,3} \nonumber \\
+ & \left(u c^{1,4}_u + v c^{1,4}_v\right) A^{2,1}_{3,4} + \left(u c^{2,3}_u + v c^{2,3}_v\right) A^{2,2}_{3,3} \nonumber \\
+ & \left(u c^{2,4}_u + v c^{2,4}_v\right) A^{2,2}_{3,4} + \left(u c^{3,4}_u + v c^{3,4}_v\right) A^{2,3}_{3,4} \nonumber \\
+ & f\left(c^{1,2}_v A^{2,1}_{4,2} + c^{1,3}_v A^{2,1}_{4,3} + c^{1,4}_v A^{2,1}_{4,4}   + c^{2,3}_v A^{2,2}_{4,3} + c^{2,4}_v A^{2,2}_{4,4} + c^{3,4}_v A^{2,3}_{4,4} \right) \nonumber \\
-  & f\left(c^{1,2}_u A^{3,1}_{4,2} + c^{1,3}_u A^{3,1}_{4,3} + c^{1,4}_u A^{3,1}_{4,4}   + c^{2,3}_u A^{3,2}_{4,3} + c^{2,4}_u A^{3,2}_{4,4} + c^{3,4}_u A^{3,3}_{4,4} \right) \nonumber \\
& = 0 {\qquad \text{~over~}\Omega.}   \label{minorsPersp}
\end{align}

By concatenating equations (\ref{minorsPersp}) for all pixels $x \in \Omega$, we get the following set of linear systems:
\begin{equation}
\label{eq:integMatrix_continuous}
    \mathbf{i}_p(x)^\top\mathbf{w} = 0, \qquad \forall x \in \Omega,
\end{equation}
where $\mathbf{w} \in \R^{18}$ contains all the minors $A^{i,j}_{k,l}$ of the ambiguity matrix $\mathbf{A}$ in Eq.\eqref{minorsPersp}, and the ``perspective integrability'' vector $\mathbf{i}_p(x)$ depends only on $u$, $v$, $f$ and $c^{i,j}_k$ i.e., known quantities. We will see later in Section~\ref{sec:5}, that numerically solving the set of equations~\eqref{eq:integMatrix_continuous} provides a simple way to numerically solve uncalibrated perspective photometric stereo under first-order spherical harmonics lighting. 

If in addition we use the fact that $\mathbf{m}$ fulfills the integrability constraint~\eqref{eq:compactIntegrabilityPerspective}, we can substitute $ (u c^{2,3}_u + v c^{2,3}_v)$ by $(-f c^{2,4}_v + f c^{3,4}_u)$ in Eq.~\eqref{minorsPersp}, and we get $17$ summands instead of $18$, turning~\eqref{eq:integMatrix_continuous} as follows
\begin{equation}\label{eq:sisilafamille}
    \mathbf{c}(x)^\top \mathbf{a} = 0 { \qquad \text{~over~}\Omega,}
\end{equation}
where $\mathbf{c}(x),\mathbf{a} \in \R^{17}$.

Since the surface is assumed to be non-degenerate (cf. Section~\ref{sec:4.3}), there exist at least $17$ points $x \in \Omega$ such that a full-rank matrix can be formed by row-wise concatenation of vectors $\mathbf{c}(x)^\top,\,x \in \Omega$. We deduce that $\mathbf{a}$ = 0 and we get the following equations:
\begin{equation}
\label{eq:systemMinorsPersp}
\left \{
   \begin{array}{l}
     A^{3,2}_{4,3} = A^{3,2}_{4,4} = A^{2,3}_{4,4} = A^{2,2}_{4,3} = 0,\\
     A^{2,2}_{3,3} = A^{3,3}_{4,4},\\
     A^{2,2}_{3,3} = A^{2,2}_{4,4},\\
     A^{2,1}_{3,2} =  A^{2,1}_{3,3} = A^{2,1}_{3,4} = A^{2,2}_{3,4} = A^{2,3}_{3,4} = 0,\\
     A^{2,1}_{4,2} = A^{2,1}_{4,3} = A^{2,1}_{4,4} = A^{2,2}_{4,3} = A^{2,3}_{4,4} = 0,\\
     A^{3,2}_{4,3} = A^{3,2}_{4,4} = A^{3,1}_{4,2} = A^{3,1}_{4,3} = A^{3,1}_{4,4} = 0.
   \end{array}
   \right .
\end{equation}

According to the first three equations of System (\ref{eq:systemMinorsPersp}), the submatrix of $\mathbf{A}$ formed by the last three rows and columns is a scaled generalized bas-relief transformation (see Corollary \ref{cor:1} in \hyperref[sec:app2]{Appendix C}). That is to say, there exists a unique quadruple $(\lambda, \mu, \nu, \alpha) \in \R^4 $ with $ \lambda \neq 0, \alpha \neq 0$, such that: 
\begin{equation}
\mathbf{A} = \begin{pmatrix}
A_{11} & A_{12} & A_{13} & A_{14} \\
A_{21} & \alpha\lambda & 0 & -\alpha\mu \\
A_{31} & 0 & \alpha\lambda & -\alpha\nu   \\
A_{41} & 0 & 0 & \alpha 
\end{pmatrix}.
\end{equation}

Taking into account the other equations of system \eqref{eq:systemMinorsPersp}, we get $\lambda = 1$, $\mu = \nu = A_{21} = A_{31} = A_{41} = 0$. Then, the same arguments as those used around Eq.~\eqref{eq:expressionACasiFinaleOrtho} yield $A_{12} = A_{13} = A_{14} = 0$, and the form~\eqref{eq:16} of { $\mathbf{m}^\ast = \mathbf{A}\mathbf{m}$} implies $A_{11} = \alpha$,  which concludes the proof.
\end{proof}

Let us remark that such a particular form of a scaled Lorentz transformation only scales all albedo values, leaving the geometry unchanged. From a practical point of view, this means that once an integrable candidate has been found, it corresponds to the genuine surface and there is no need to manually solve any ambiguity, unlike in the orthographic case. In Section~\ref{sec:5}, we will see that such a candidate can be estimated in closed-form in the discrete setting. This will allow us to empirically verify the validity of our theoretical results, through numerical experiments on simulated images. Before that, let us briefly elaborate on degenerate surfaces i.e., surfaces for which the two theorems in the present section do not hold. 
\newpage
\subsection{Degenerate surfaces}
\label{sec:4.3}

The two previous theorems rely on the assumption that the surface is non ``degenerate''. Although degenerate surfaces are rarely encountered in practice, this notion needs to be clarified for the completeness of this study.

Degenerate surfaces are those having a particularly simple shape, which causes the matrix formed by concatenation of the integrability vectors ($\mathbf{i}_o(x)$ in the orthographic case, cf.~\eqref{eq:uneLigne}, or $\mathbf{c}(x)$ in the perspective case, cf.~\eqref{eq:sisilafamille}) not to be full-rank. Here we algebraically characterize such surfaces, for which the integrability constraint is not enough to solve the Lorentz ambiguity. 

\subsubsection{Orthographic case}

Let $\mathbf{m} : \Omega \to \R^4$ be a field of the form of \eqref{eq:16}, and let $\rho$ and $\mathbf{n}$ be the corresponding albedo map and normal field, respectively. We denote by $(c_1,c_2,c_3,c_4)$ the four components of the field $\mathbf{m}$, and use the definition~\eqref{notationVect} of the coefficients $c^{i,j}_k$, $i,j \in \{1,\dots,4\}$, $k \in \{u,v\}$. Then, the surface defined by the field $\mathbf{m}$ is degenerate iff the $\left(c^{i,j}_k \right)_{(k,i,j)\neq(v,2,4)}$ are linearly dependent, i.e. if there exists a non-zero vector $(\lambda^{i,j}_k ) \in \R^{11}\backslash \{0\}$ such that for any pixel $x\in \Omega$
\begin{equation}
    \sum_{\substack{k \in \lbrace u,v \rbrace \\ 1 \leq i < j \leq 4 \\ (k,i,j)\neq(v,2,4)}} \lambda^{i,j}_k c^{i,j}_k(x) = 0.
\end{equation}

To illustrate this notion on some examples, let us remark that by definition of the coefficients $\left(c^{i,j}_k \right)$:
\begin{equation}
\label{eq:relationNormalsCfields}
    \rho\,\mathbf{n} \times \rho\,\mathbf{n}_u = \begin{pmatrix}
-c^{3,4}_u \\
c^{2,4}_u\\
-c^{2,3}_u
\end{pmatrix}, \qquad \rho\,\mathbf{n} \times \rho\,\mathbf{n}_v = \begin{pmatrix}
-c^{3,4}_v \\
c^{2,4}_v\\
-c^{2,3}_v
\end{pmatrix} \qquad \text{~over~} \Omega,
\end{equation}
where $\times$ denotes the cross-product.

Therefore, the following sufficient (but not necessary) conditions to be a degenerate surface can be formulated:
\begin{itemize}
    \item $\mathbf{n}_u = \mathbf{n}_v = 0$: a planar surface.
    \item $\mathbf{n}_u = 0$ and $\mathbf{n}_v \neq 0$: a surface with vanishing curvature along $u$ (see Figure (\ref{subfig:degenerate1})) ;
    \item $\mathbf{n}_u \neq 0$ and $\mathbf{n}_v = 0$: a surface with vanishing curvature along $v$ (see Figure (\ref{subfig:degenerate2})) ;
    \item $\mathbf{n}_u = \mathbf{n}_v$: a surface with vanishing curvature along $u = -v$ (see Figure (\ref{subfig:degenerate3})) ;
    \item $\mathbf{n}_u = -\mathbf{n}_v$, a surface with vanishing curvature along $u = v$ (see Figure (\ref{subfig:degenerate4})).
\end{itemize}

\begin{figure}[!ht]
    \centering
  \begin{subfigure}[b]{0.45\textwidth}
    \includegraphics[width=\textwidth]{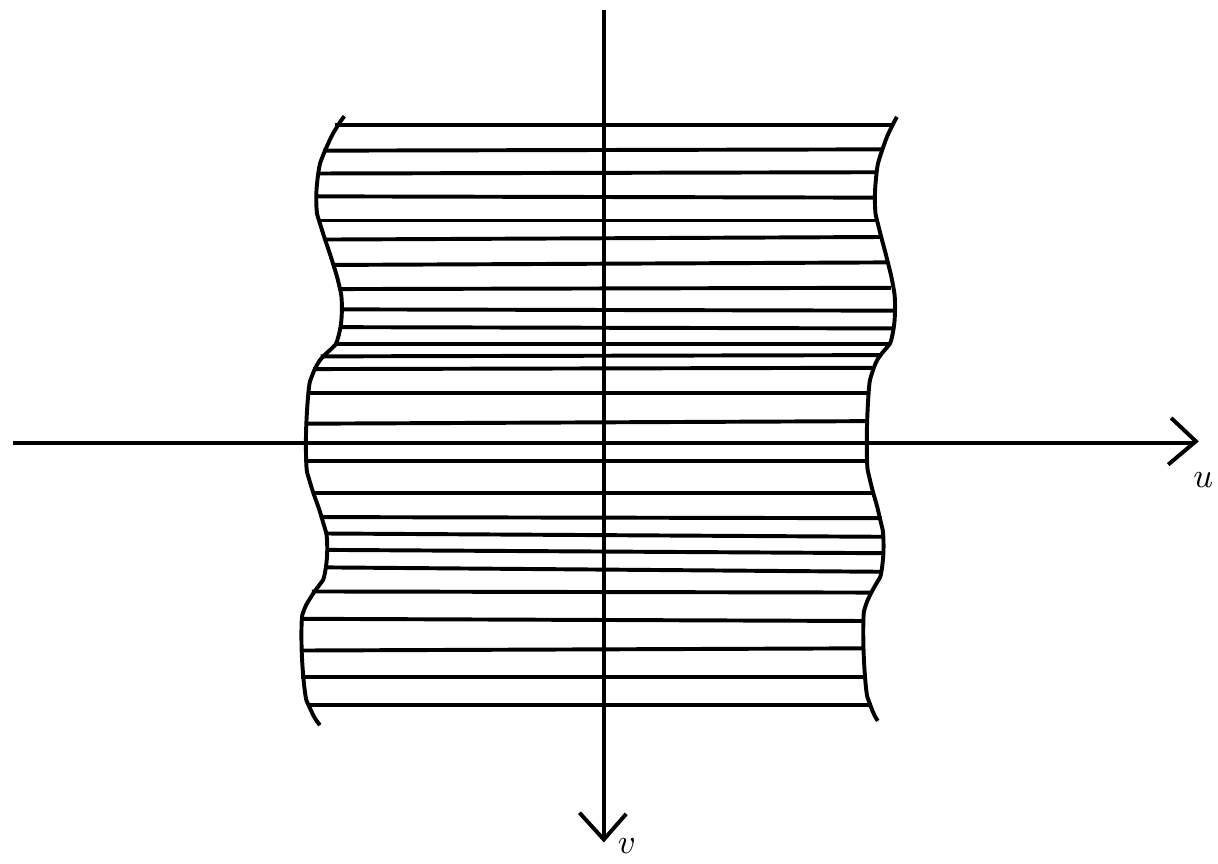}
    \caption{\label{subfig:degenerate1}$\mathbf{n}_u = 0,\mathbf{n}_v \neq 0$ }
  \end{subfigure}
\qquad
  \begin{subfigure}[b]{0.45\textwidth}
    \includegraphics[width=\textwidth]{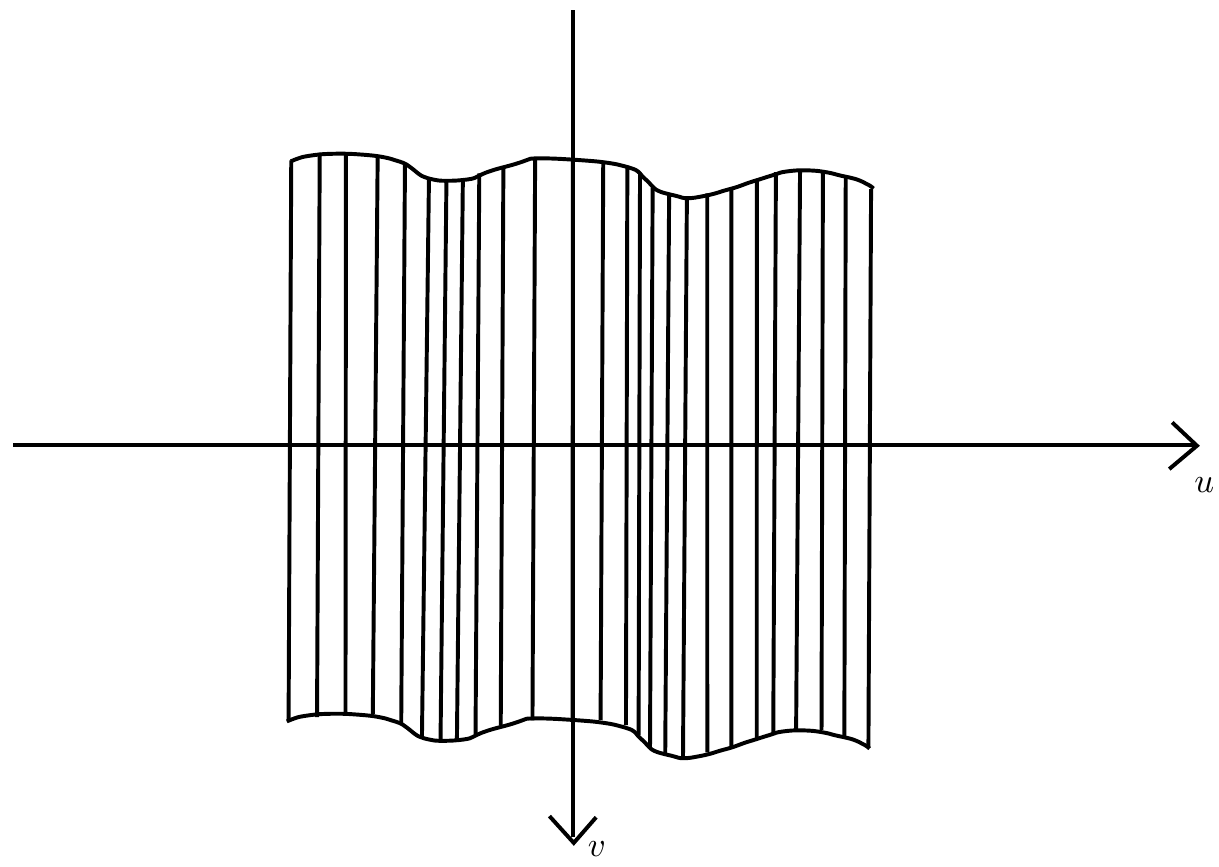}
    \caption{$\mathbf{n}_u \neq 0,\mathbf{n}_v = 0$ \label{subfig:degenerate2}}
  \end{subfigure}
  \begin{subfigure}[b]{0.45\textwidth}
    \includegraphics[width=\textwidth]{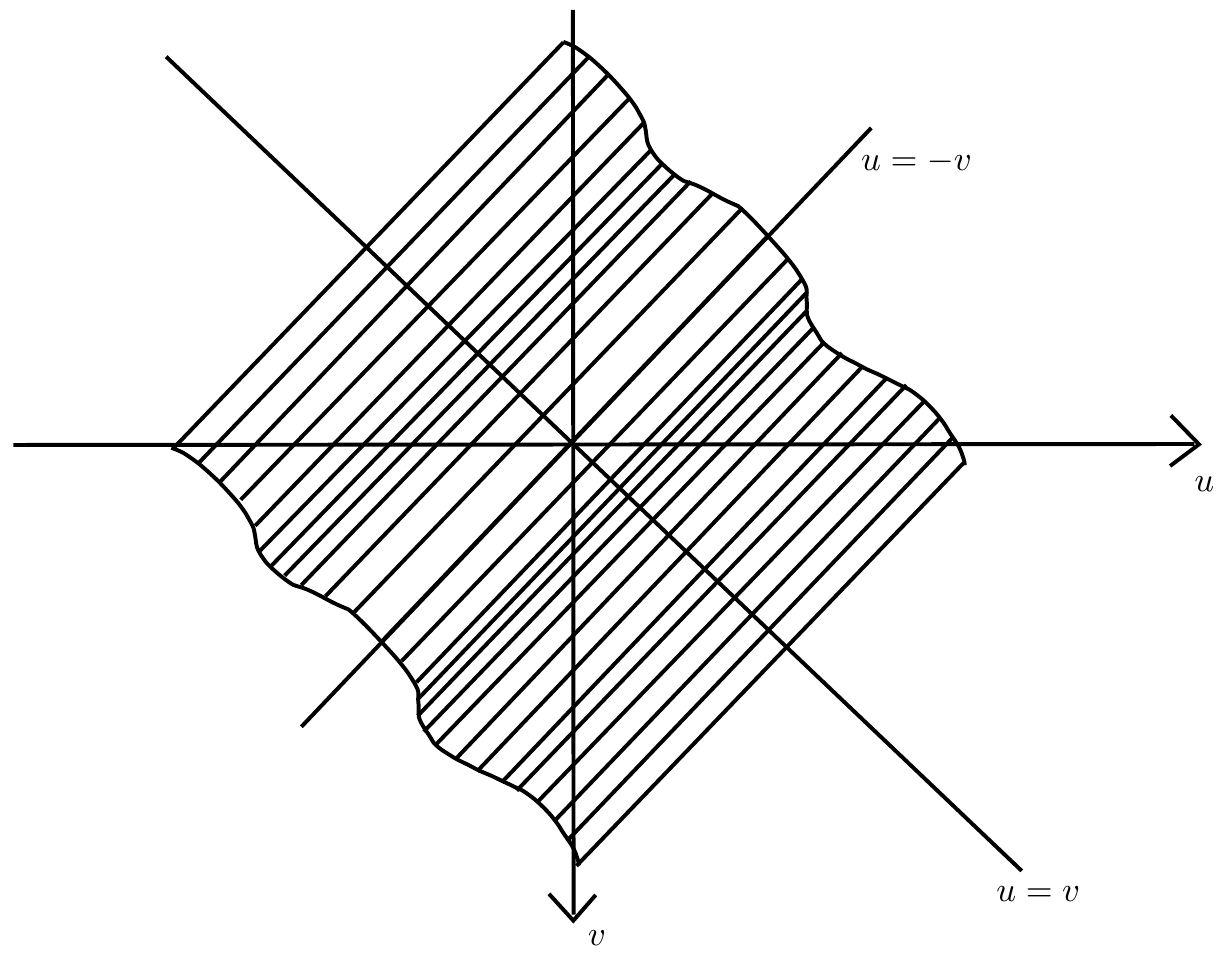}
    \caption{$\mathbf{n}_u = \mathbf{n}_v$ \label{subfig:degenerate3}}
  \end{subfigure}
  \qquad
  \begin{subfigure}[b]{0.45\textwidth}
    \includegraphics[width=\textwidth]{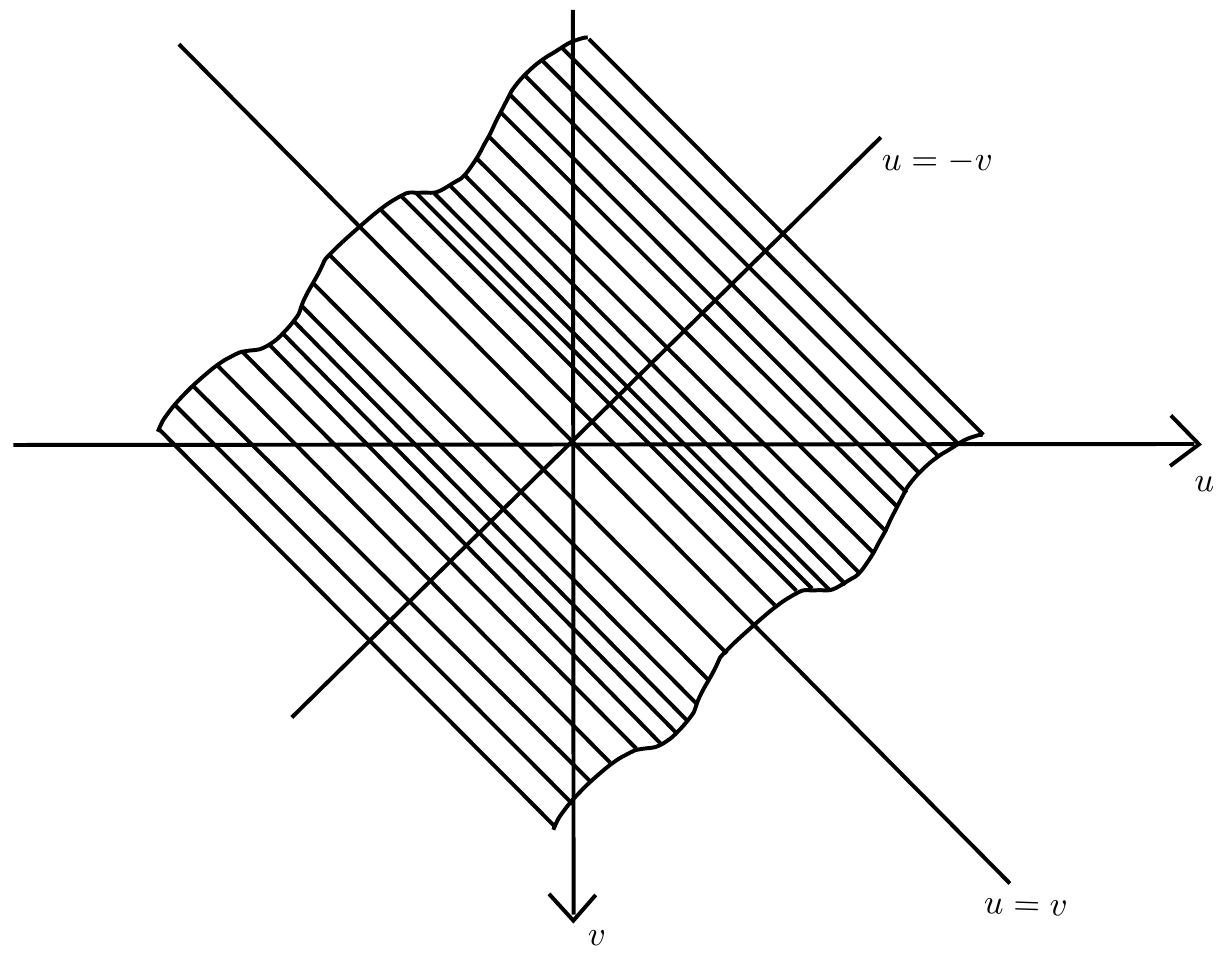}
    \caption{$\mathbf{n}_u = -\mathbf{n}_v$ \label{subfig:degenerate4}}
  \end{subfigure}
  \caption{Examples of degenerate surfaces in the orthographic case.}
  \label{fig:degenerate}
\end{figure}

\subsubsection{Perspective case}

Analogously, a surface is degenerate under perspective projection iff there exists a non-zero vector $\left( (\alpha^{i,j}_k), (\beta_{i,j})_{(i,j) \neq (2,3)} \right) \in \R^{17}\backslash \{0\}$ such that, for any pixel $x = (u,v) \in \Omega$:
\begin{equation}
    \left [\sum_{k \in \lbrace u,v \rbrace}
    \sum_{1 \leq i < j \leq 4} \left (\alpha^{i,j}_k  \right) f c^{i,j}_k(x) \right ]
    + 
    \sum_{\substack{1 \leq i < j \leq 4 \\ (i,j)\neq(2,3)}} \beta_{i,j}\left(u c^{i,j}_u(x) + v c^{i,j}_v(x) \right) = 0,
\end{equation}
where $f$ is the focal length.

The surfaces shown in Figure~\ref{fig:degenerate} are examples of degenerate surfaces. There exist other examples, yet it is not straightforward to characterize them in a simple way. On the other hand, in practice if the surface was simple enough to yield degeneracy, one would not resort to photometric stereo at all. In real-world problems, the geometry of the pictured surface is rich enough, so degenerate surfaces rarely or never arise. This means, it is possible to numerically solve equations such as~\eqref{eq:integMatrix_continuous} in a stable manner. As shown in the next section, this provides a practical way to numerically solve perspective uncalibrated photometric stereo under general lighting. 

\section{Numerical solving of the perspective case}
\label{sec:5}

In this section, we derive a practical algorithm for solving perspective uncalibrated photometric stereo under first-order spherical harmonics lighting. More specifically, we provide a closed-form solution for an integrable normal field satisfying the image formation model~\eqref{eq:15}, provided that the perspective camera is calibrated (i.e., its focal length and principal point are known). 

\subsection{Discrete formulation}

First, let us reformulate the problem in the discrete setting. Let us stack all the observations $I^i(x)$, $i \in \{1,\dots,m\}$, $x \in \Omega$, in a matrix $\mathbf{I} \in \R^{m \times n}$, with $n = \lvert \Omega \rvert$ the number of pixels. Similarly to the directional lighting case represented by~\eqref{eq:6}, the set of linear systems~\eqref{eq:15} can be rewritten in matrix form as:
\begin{equation}
    \mathbf{I} = \mathbf{L} \mathbf{M},
    \label{eq:discrete_model}
\end{equation}
where $\mathbf{L} \in \R^{m \times 4}$ is the general lighting matrix, and $\mathbf{M} \in \R^{4 \times n}$ stacks all the unknown $\mathbf{m}(x)$-vectors columnwise (each column $\mathbf{m}_j = \mathbf{m}(x_j)$ has thus the form given in Eq.~\eqref{eq:16}). 

As shown in~\cite{Basri2007}, a least-squares solution $(\mathbf{L}_1,\mathbf{M}_1)$ of~\eqref{eq:discrete_model} satisfying the constraint~\eqref{eq:16} can be obtained by singular value decomposition of $\mathbf{I}$. Since we know that any other $\mathbf{M}$-matrix solution of~\eqref{eq:discrete_model} differs from $\mathbf{M}_1$ according to a scaled Lorentz transform, the genuine solution $\mathbf{M^*} \in \R^{4 \times n}$ is given by
\begin{equation}
\label{eq:5.0}
    \mathbf{M^*} = \mathbf{A}\mathbf{M_1},
\end{equation}
with $\mathbf{A} \in L_s$ an unknown scaled Lorentz transformation. 

In the last section we have seen that there exists a unique $\mathbf{m}$-field which both satisfies the image formation model and is integrable. This means, that if the pictured surface is twice differentiable and non degenerate, then matrix $\mathbf{A}$ in~\eqref{eq:5.0} is unique (up to scale). In fact, we only need the last three rows of matrix $\mathbf{A}$: left-multiplying the last three rows of the initial guess $\mathbf{M}_1$ by this submatrix, we obtain a matrix of size $3 \times n$ where the norm of the $j$-th column is the albedo at the surface point conjugate to pixel $x_j$, and normalizing each column yields the surface normal at this point.

The problem thus comes down to estimating the last three rows of matrix $\mathbf{A}$. According to Proposition~\ref{prop:7} in \hyperref[sec:app2]{Appendix C}, these rows can be written in the form $(\mathbf{v}\text{ | }\mathbf{Q}) \in \R^{3 \times 4}$, where $\mathbf{v} \in \R^3$ and $\mathbf{Q} \in GL(3,\R)$. Next we show how to estimate $\mathbf{v}$ and $\mathbf{Q}$ in closed-form, using a discrete analogous of the perspective integrability constraint. 

\subsection{Closed-form solution through discrete integrability enforcement}

During the proof of Theorem~\ref{thm:3}, we showed that the integrability constraint yields the set of linear systems~\eqref{eq:integMatrix_continuous} over $\Omega$. In the discrete setting, this set of equations can be written compactly as
\begin{equation}
    \mathbf{I}_p \mathbf{w} = 0, 
    \label{eq:disc_int}
\end{equation}
where $\mathbf{w} \in \R^{18}$ contains several minors of size $2$ denoted by $\left(A^{i,j}_{k,l}\right)$, and $\mathbf{I}_p \in \R^{n \times 18}$ is a ``perspective integrability matrix'' depending only upon the known camera parameters and entries of $\mathbf{M}_1$. 

Matrix $\mathbf{I}_p$ is in general full-rank. Thus, the least-squares solution (up to scale) of~\eqref{eq:disc_int} in terms of vector $\mathbf{w}$ can be determined by singular value decomposition of $\mathbf{I}_p$: denoting by $\mathbf{I_p} = \mathbf{U} \mathbf{\Sigma} \mathbf{V}^\top$ this decomposition, the solution $\mathbf{w}$ is the last column of $\mathbf{V}$. We denote by $\left(\tilde{A}^{i,j}_{k,l}\right) = \left(\lambda\, A^{i,j}_{k,l}\right)$ its entries, where $\lambda \neq 0$ denotes the unknown scale factor. 

Now, recall that matrix $\mathbf{Q} \in \R^{3 \times 3}$ to be determined is the sub-matrix formed by the last three rows and columns of $\mathbf{A}$. It relates to the aforementioned minors according to
\begin{align}
   & \mathbf{Q}^{-1} = \dfrac{1}{\text{det}(\mathbf{Q})} \text{com}(\mathbf{Q})^\top = \dfrac{1}{\text{det}(\mathbf{Q})} \begin{pmatrix}
A^{3,3}_{4,4} & -A^{2,3}_{4,4} & A^{2,3}_{3,4} \\[4pt]
-A^{3,2}_{4,4} & A^{2,2}_{4,4} & -A^{2,2}_{3,4}  \\[4pt]
A^{3,2}_{4,3} & -A^{2,2}_{4,3} & A^{2,2}_{3,3}
\end{pmatrix},
 \end{align}
where $\text{com}(\mathbf{Q})$ is the comatrix of $\mathbf{Q}$. Thus: 
\begin{align}
   & \lambda\, \mathbf{Q}^{-1} = \dfrac{1}{\text{det}(\mathbf{Q})} \mathbf{\Delta}^{-1}, \text{~where~} \mathbf{\Delta} = \begin{pmatrix}
\tilde{A}^{3,3}_{4,4} & -\tilde{A}^{2,3}_{4,4} & \tilde{A}^{2,3}_{3,4} \\[4pt]
-\tilde{A}^{3,2}_{4,4} & \tilde{A}^{2,2}_{4,4} & -\tilde{A}^{2,2}_{3,4}  \\[4pt]
\tilde{A}^{3,2}_{4,3} & -\tilde{A}^{2,2}_{4,3} & \tilde{A}^{2,2}_{3,3}
\end{pmatrix}^{-1}. \label{eq:5.1}
\end{align}
Hence, we can determine $\mathbf{Q}$ up to scale:
\begin{equation}
    \label{eq:5.2}
    \mathbf{Q} = (\lambda\, \text{det}\mathbf{Q})\, \mathbf{\Delta}.
\end{equation}

Next, we turn our attention to the estimation of vector $\mathbf{v} \in \R^3$ (recall that this vector is formed by the first column and last three rows of $\mathbf{A}$), up to scale. To this end, we consider the last nine minors. For example, considering $\tilde{A}^{2,1}_{3,2}$:
\begin{align}
     \tilde{A}^{2,1}_{3,2}= &  \lambda\, (A_{21}A_{32} - A_{31}A_{22})\\
                         = &  \lambda\, (A_{21}Q_{21} - A_{31}Q_{11})\\
     \underset{\text{(\ref{eq:5.2})}}{=} &  \left(\lambda^2 \text{det}(\mathbf{Q})A_{21}\right)\Delta_{21} - \left(\lambda^2 \text{det}(\mathbf{Q})A_{31}\right)\Delta_{11}. \label{eq:5.3} 
\end{align}
Let $\hat{\mathbf{v}} = \begin{pmatrix}
\hat{v}_1 \\
\hat{v}_2 \\
\hat{v}_3
\end{pmatrix} = \lambda^2 \,\text{det}(\mathbf{Q}) \begin{pmatrix}
A_{21} \\
A_{31} \\
A_{41}
\end{pmatrix} = (\lambda^2 \text{det}(\mathbf{Q}))\mathbf{v}$. Eq.~(\ref{eq:5.3}) can be written as:
\begin{equation}
     \Delta_{21}\hat{v}_1 - \Delta_{11}\hat{v}_2 = \hat{A}^{2,1}_{3,2}.
\end{equation}
In the same manner, by using all the other minors which involve $A_{21},A_{31}$ or $A_{41}$, we get the following over-constrained linear system:
\begin{equation}
    \mathbf{S}\hat{\mathbf{v}} = \mathbf{b},
\end{equation}
where $\mathbf{S} \in \R^{9 \times 3}$ and $\mathbf{b} \in \R^9$. A least-squares solution for $\hat{\mathbf{v}}$ can be found using, e.g., the pseudo inverse:
\begin{equation}
    \hat{\mathbf{v}} = \mathbf{S}^\dagger\mathbf{b}.
\end{equation}
Besides,
\begin{align}
    \lambda^2 \text{det}(\mathbf{Q}) (\mathbf{v}\text{ | }\mathbf{Q})& = (\lambda^2 \text{det}(\mathbf{Q})\mathbf{v}\text{ | }\lambda^2 \text{det}(\mathbf{Q})\mathbf{Q})\\
    & = (\hat{\mathbf{v}}\text{ | }\lambda^3 \text{det}(\mathbf{Q})^2  \mathbf{\Delta}), \label{eq:5.4}
\end{align}
and applying the determinant to both sides of Eq.~(\ref{eq:5.2}) yields: 
\begin{equation}
    \lambda^3 \text{det}(\mathbf{Q})^2 = \dfrac{1}{\text{det}(\mathbf{\Delta})}.
    \label{eq:64}
\end{equation}

Plugging~\eqref{eq:64} into (\ref{eq:5.4}), we eventually obtain the following closed-form expression for $(\mathbf{v}\text{ | }\mathbf{Q})$:
\begin{equation}
     (\mathbf{v}\text{ | }\mathbf{Q})= \dfrac{1}{\lambda^2 \text{det}(\mathbf{Q})} \left(\hat{\mathbf{v}}\text{ | }\dfrac{1}{\text{det}(\mathbf{\Delta})} \mathbf{\Delta}\right).
     \label{eq:closedformVQ}
\end{equation}

Since $\lambda$ and $\text{det}(\mathbf{Q})$ in~\eqref{eq:closedformVQ} are unknown, the solution $(\mathbf{v}\text{ | }\mathbf{Q})$ is known only up to scale. As already stated, the actual value of this scale factor is not important, since it only scales all abedo values simultaneously without affecting the geometry. Let us denote by $\tilde{\mathbf{M}}_1$ the submatrix formed by the last three rows of the initial guess $\mathbf{M}_1$. Then, matrix $\tilde{\mathbf{M}}_2 = (\mathbf{v}\text{ | }\mathbf{Q}) \tilde{\mathbf{M}}_1$ is a $3 \times n$ matrix where each column corresponds to one surface normal, scaled by the albedo. The norm of each column of $\tilde{\mathbf{M}}_2$ thus provides the sought albedo (up to scale), and normalizing each column provides the sought surface normal. 

Therefore, we now have at hand a practical way to find an integrable normal field solving uncalibrated photometric stereo under general lighting and perspective projection. In the next subsection, we show on simulated data that such a solution indeed corresponds to the genuine surface, which provides an empirical evidence for the theoretical analysis conducted in the previous section. 

\subsection{Experiments}

To empirically validate the well-posedness of perspective uncalibrated photometric stereo under general lighting, we implemented the previous algorithm in Matlab, and evaluated it against 16 synthetic datasets. These datasets were created by considering four 3D-shapes (``Armadillo'', ``Bunny'', ``Joyful Yell''  and ``Thai Statue''\footnote{Joyful Yell: \url{https://www.thingiverse.com/thing:897412}; other datasets: \url{http://www-graphics.stanford.edu/data/3dscanrep}}) and four different albedo maps (``White'', ``Bars'', ``Ebsd'' and ``Voronoi''). Ground truth normals were deduced from the depth maps using \eqref{eq:normal_perspective}, approximating partial derivatives of the depth with first-order finite differences. Then, for each of the $16$ combinations of 3D-shape and albedo, $m=21$ images were simulated according to~\eqref{eq:14}, while varying the lighting coefficient, as illustrated in Figure~\ref{fig:1}. Each image is of size $1600 \times 1200$, and comes along with the ground-truth normals, reconstruction domain $\Omega$, and intrinsic camera parameters (the focal length ${f}$, and the principal point used as reference for pixel coordinates). For the evaluation, we measured the {mean angular error (in degrees)} between the estimated and the ground-truth normals.

\begin{figure}[!ht]
    \centering
    \includegraphics[width=.9\textwidth]{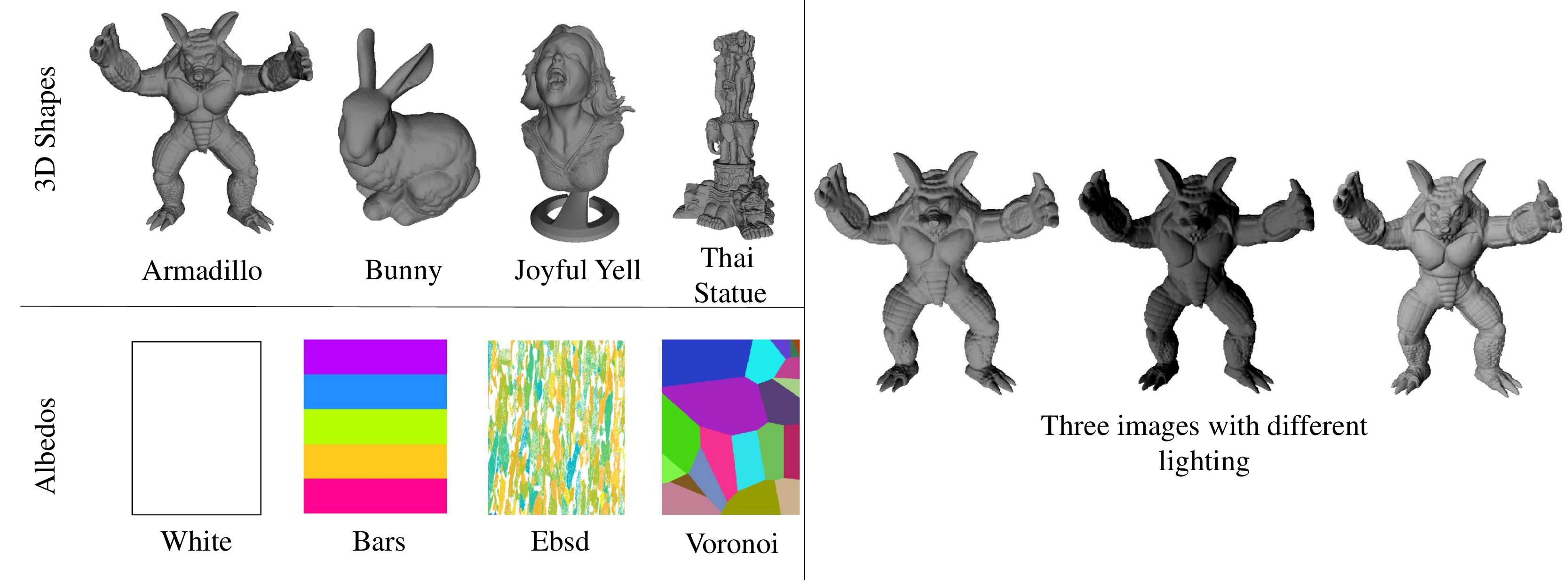}
   \caption{The four 3D-shapes and four albedo maps used to create 16 (3D-shape, albedo) datasets. For each dataset, $m=21$ images were rendered under varying first-order spherical harmonics lighting. On the right, we show three images of the (``Armadillo'', ``White'') dataset.}
   \label{fig:1}
\end{figure}

As can be seen in Table~\ref{tab:1}, the mean angular error on the estimated normals is very low (less than $10$ degrees for all datasets). This confirms that the geometry of the scene is unambiguously estimated. The images being synthesized without any additional noise or outlier to the Lambertian model (e.g., shadows or specularities), one may however be surprised that the mean angular error is non-zero. As suggested in~\cite{Papadhimitri2013}, the observed residual errors may be due to the finite differences approximation of partial derivatives arising in the perspective integrability matrix (matrix $\mathbf{I}_p$ in~\eqref{eq:disc_int}, which contains the partial derivatives of the entries of the initial $\mathbf{m}$-field, cf.~\eqref{minorsPersp}). In our implementation, we considered first-order finite differences: other choices of finite differences might reduce the error, yet we leave this as a perspective.

\begin{table}[!ht]
\centering
\begin{tabular}{l*{4}{c}}
\toprule
& \multicolumn{4}{c}{Albedo} \\
\cmidrule(lr){2-5}
3D-shape & ~~~~White~~ & ~~Bars~~ & ~~Ebsd~~ & ~~Voronoi~~~~ \\
\midrule
Armadillo & 2.01 & 1.81 & 2.13 & 2.03 \\
Bunny & 1.42 & 1.38 & 1.63 & 1.42\\
Joyful Yell & 5.13 & 5.35 & 5.46 & 5.28\\
Thai Statue & 6.33 & 6.40 & 6.46 & 7.62 \\
\bottomrule
\end{tabular}
\caption{Mean angular error (in degrees), for each (3D-shape, albedo) combination. The error remains below $10$ degrees for each dataset. This indicates that the genuine geometry is recovered, and empirically confirms the well-posedness of perspective uncalibrated photometric stereo under first-order spherical harmonics lighting.}
\label{tab:1}
\end{table}

Next, we evaluated the robustness of the proposed approach to an increasing amount of zero-mean, Gaussian noise added to the images of the (Armadillo, White) dataset. As can be seen in Table~\ref{tab:2}, the proposed method dramatically fails as soon as the noise becomes really perceptible (here, failure is observed when standard deviation $\sigma > 0.5 \%$). For comparison, we also provide the results obtained with the state-of-the-art method~\cite{Haefner2019}, which is based on heuristical shape initialization followed by regularized nonconvex refinement. The heuristical nature of the initialization induces a non-negligible bias in shape estimation, which is clearly visible on noise-free data. However, this alternative is much more robust to noise.

\begin{table}[!ht]
\centering
\begin{tabular}{l*{9}{c}}
\toprule
& \multicolumn{9}{c}{Standard deviation $\sigma$ (in percents of the maximum intensity)} \\
\cmidrule(lr){2-10}
Method & ~~~~$0.00$~~ & ~~$0.01$~~ & ~~$0.02$~~ & ~~$0.04$~~ & ~~$0.1$~~ & ~~$0.2$~~ & ~~$0.3$~~ & ~~$0.4$~~ & ~~$0.5$~~~~ \\
\midrule
\cite{Haefner2019} & 18.19 & 18.19 & 18.19 & 18.19 & 18.19 & 18.19 & 18.19 & 18.20 & \textbf{18.20} \\
Ours & \textbf{2.01} & \textbf{2.07} & \textbf{2.12} & \textbf{2.33} & \textbf{2.90} & \textbf{4.43} & \textbf{6.56} & \textbf{9.14} & 113.38  \\
\bottomrule
\end{tabular}
\caption{Mean angular error (in degrees) on the (Armadillo, White) dataset, with increasing amount of zero-mean Gaussian noise added to the input images. When noise is negligible, the proposed method largely outperforms the state-of-the-art method from~\cite{Haefner2019}. However, it should be discarded in the presence of strong noise.}
\label{tab:2}
\end{table}

This is not really surprising, since the proposed method is spectral, and the alternative one is based on evolved nonconvex optimization. In general, the former is faster (in our implementation on a recent computer, our results were obtained in less than $10$ seconds, and the alternative ones in around $30$ minutes), but the latter is more robust. Similar observations have been made in other computer vision communities, e.g. pose estimation: the $8$-point algorithm~\cite{8pointAlgorithm} is usually replaced by bundle adjustment techniques~\cite{Triggs1999} in order to handle the unavoidable noise arising in real-world data. Overall, the proposed algorithm should be considered only as a way to empirically confirm the well-posedness of the problem, yet on real-world data the existing numerical implementations should be preferred.

\section{Conclusion and perspectives}
\label{sec:6}

We have studied the well-posedness of uncalibrated photometric stereo under general illumination, represented by first-order spherical harmonics. We have established that integrability reduces the scaled Lorentz ambiguity to a global concave$/$convex ambiguity in the orthographic case, and resolves it in the perspective one. As Table~\ref{tab:3} summarizes, this generalizes previous results which were restricted to the directional lighting model. Still, open questions remain concerning further generalization of these results to even more evolved lighting models. For instance, future research on the topic may consider the case of unknown second-order spherical harmonics~\cite{Basri2007}, or that of unknown nearby point light sources~\cite{Papadhimitri2014}. Such generalizations would be of interest from a practical perspective, because the former represents natural illumination very accurately~\cite{Frolova2004}, and the latter allows using inexpensive light sources such as LEDs~\cite{Queau2018}.

\begin{table}[!ht]
\centering
\begin{tabular}{lc*{2}{c}}
\toprule
& & \multicolumn{2}{c}{Effect of imposing integrability} \\
\cmidrule(lr){3-4}
Lighting model & Underlying ambiguity & Orthographic & ~~~Perspective \\
\midrule
Directional & $9$-dof (linear)~\cite{Hayakawa1994} & $3$-dof (GBR)~\cite{Yuille1997} & Well-posed~\cite{Papadhimitri2013} \\
SH$_1$ & $6$-dof (scaled Lorentz)~\cite{Basri2007} & $\mathbf{1}$\textbf{-dof (concave$/$convex)} & \textbf{Well-posed} \\
SH$_2$ & $9$-dof (linear)~\cite{Basri2007} & ? & ?  \\
Nearby point & $4$-dof (rotation and scale)~\cite{Papadhimitri2014} & ? & ?  \\
\bottomrule
\end{tabular}
\caption{State-of-the-art of theoretical results concerning the well-posedness of uncalibrated photometric stereo under different lighting models (directional, spherical harmonics of order $1$ and $2$, or nearby point sources). We indicate the number of degrees of freedoms (dof) of the underlying ambiguity, and how imposing integrability reduces this number under both orthographic and perspective projection. The bold results refer to the findings in the present paper, and the question marks to remaining open problems.}
\label{tab:3}
\end{table}

\bibliographystyle{plain}
\bibliography{biblio}

\appendix
\section*{Appendix}

\subsection*{A) Proof of Proposition~\ref{cor:2}}
\label{sec:app3}

Proposition~\ref{cor:2} characterizes the integrability of a normal field in terms of the coefficients $m_1$, $m_2$ and $m_3$ of $\mathbf{m}:=\rho \mathbf{n}$. The following proof of this proposition is largely inspired by \cite{Papadhimitri2013}. 

\begin{proof}
According to Equations~\eqref{eq:new15} to~\eqref{eq:schwarz_persp}, integrability of the normal field under perspective projection can be written as:
\begin{align}
& \hat{p}_{v} = \hat{q}_{u}\text{~over~}\Omega, \nonumber\\
 \iff & \left (\frac{p}{f - u p - v q} \right)_v = \left(\frac{q}{f - u p - v q}\right)_u\text{~over~}\Omega, \nonumber\\
 \iff & f p_{v} - v q p_{v} + v p q_{v} - f q_{u} + u p q_{u} - u q p_{u} = 0\text{~over~}\Omega, \nonumber\\
 \iff & \begin{pmatrix}
                        p_v \\
                        q_v  \\
                        0
                        \end{pmatrix}^\top \begin{pmatrix}
                        0 \\
                        -f  \\
                        v
                        \end{pmatrix} \times \begin{pmatrix}
                        p \\
                        q  \\
                        -1
                        \end{pmatrix} + \begin{pmatrix}
                        p_u \\
                        q_u  \\
                        0
                        \end{pmatrix}^\top \begin{pmatrix}
                        -f \\
                        0  \\
                        u
                        \end{pmatrix} \times \begin{pmatrix}
                        p \\
                        q  \\
                        -1 
                        \end{pmatrix} = 0\text{~over~}\Omega, \label{integVectoriel}
\end{align}
where $\times$ denotes the cross-product.

Besides, $-\dfrac{\mathbf{m}}{m_3} = \begin{pmatrix}
p \\
q  \\
-1
\end{pmatrix}$  according to~\eqref{eq:new14}. If we denote $\mathbf{w}_1 = \left[0,-f,v\right]^\top$ and $\mathbf{w}_2 = \left[-f,0,u\right]^\top$, then~\eqref{integVectoriel} yields the following equation over $\Omega$:
\begin{align}
        & \left(\dfrac{-\mathbf{m}}{m_3}\right)_v^\top \mathbf{w}_1 \times \left(\dfrac{-\mathbf{m}}{m_3}\right) + \left(\dfrac{-\mathbf{m}}{m_3}\right)_u^\top \mathbf{w}_2 \times \left(\dfrac{-\mathbf{m}}{m_3}\right)= 0,\nonumber\\
   \iff & \left(-\dfrac{m_3\mathbf{m}_v - m_{3v}\mathbf{m}}{{m_3}^2}\right)^\top \mathbf{w}_1 \times \left(\dfrac{-\mathbf{m}}{m_3}\right) +\left(-\dfrac{m_3 \mathbf{m}_u - m_{3u}\mathbf{m}}{{m_3}^2}\right)^\top \mathbf{w}_2 \times \left(\dfrac{-\mathbf{m}}{m_3}\right) = 0.\label{eq:rzqsdq}
\end{align}
Multiplying Eq.~\eqref{eq:rzqsdq} by ${m_3}^3$:
\begin{equation}
(m_3\mathbf{m}_v - m_{3v}\mathbf{m})^\top \mathbf{w}_1 \times \mathbf{m} + (m_3\mathbf{m}_u - m_{3u}\mathbf{m})^\top \mathbf{w}_2 \times \mathbf{m} = 0 \text{\quad over~}\Omega.
\end{equation}

In addition, $(\mathbf{w}_1 \times \mathbf{m}) \perp \mathbf{m}$ and $(\mathbf{w}_2 \times \mathbf{m}) \perp \mathbf{m}$, thus the following relationship holds over $\Omega$:
\begin{align}
      &  m_3\mathbf{m}_v^\top (\mathbf{w}_1 \times \mathbf{m}) + m_3\mathbf{m}_u^\top (\mathbf{w}_2 \times \mathbf{m}) = 0, \nonumber\\
\iff & \mathbf{m}_v^\top(\mathbf{w}_1 \times \mathbf{m}) + \mathbf{m}_u^\top (\mathbf{w}_2 \times \mathbf{m}) = 0, \nonumber\\
\iff &  u(m_{1u}m_{2} - m_{1}m_{2u} ) + v(m_{1v}m_{2} - m_{1}m_{2v}) \nonumber\\
& + f(m_{1v}m_{3} - m_{1}m_{3v}) - f(m_{2u}m_{3} - m_{2}m_{3u}) = 0. 
             \label{integConstraintPersp}
\end{align}
which concludes the proof.
\end{proof}

\subsection*{B) Proof of Theorem 1}
\label{sec:app1}

Theorem~\ref{thm:1} characterizes scaled Lorentz transformations. Its proof relies on the following Propositions \ref{prop:1}, \ref{prop:2} and \ref{prop:3} from Lorentz' group theory (proofs of these propositions can be found in~\cite{Jaffe2013}).

\begin{proposition}
{ For any proper and orthochronous Lorentz transformation $\mathbf{A} \in L^{p}_{o}$, there exists a unique couple $(\mathbf{v}, \mathbf{O}) \in B(\mathbf{0},1) \times SO(3,\R)$ such that}
\begin{equation}
\mathbf{A} = \mathbf{S}(\mathbf{v})\, \mathbf{R}(\mathbf{O}) = \begin{pmatrix}
\gamma & \gamma \, \mathbf{v}^\top\mathbf{O}\\
 \\
\gamma \, \mathbf{v} &\hspace{1mm} (\mathbf{I_{3}} + \frac{\gamma^{2}}{1 + \gamma} \mathbf{v} \mathbf{v}^\top) \mathbf{O} \\ 
\\
\end{pmatrix},
\end{equation}
where $\gamma = \frac{1}{\sqrt{1-\norm{\mathbf{v}}^2}}$, and 
\begin{align}
  \mathbf{S}(\mathbf{v}) = \begin{pmatrix}
\gamma & \gamma \, \mathbf{v}^\top\\
 \\
\gamma \, \mathbf{v} & \hspace{1mm} \mathbf{I_{3}} + \frac{\gamma^{2}}{1 + \gamma} \mathbf{v} \mathbf{v}^\top   \\ 
\\
\end{pmatrix}, \qquad
& \mathbf{R}(\mathbf{O}) = \begin{pmatrix}
1 & 0 & 0 & 0\\
0 \\
0 & &  \mathbf{O} \\ 
0 \\
\end{pmatrix}.
\end{align}
\label{prop:1}
\end{proposition}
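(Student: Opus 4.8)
The plan is to read Proposition~\ref{prop:1} as the Lorentz-group analogue of a polar decomposition, splitting $\mathbf{A}$ into a pure boost $\mathbf{S}(\mathbf{v})$ followed by a spatial rotation $\mathbf{R}(\mathbf{O})$. Throughout I encode the quadratic form $l$ by the matrix $\mathbf{J} = \mathrm{diag}(-1,1,1,1)$, so that $\mathbf{A}\in L$ is equivalent to $\mathbf{A}^\top\mathbf{J}\mathbf{A} = \mathbf{J}$, and I write $\mathbf{A}$ in block form as $\mathbf{A} = \begin{pmatrix} a & \mathbf{b}^\top \\ \mathbf{c} & \mathbf{D}\end{pmatrix}$ with $a\in\R$, $\mathbf{b},\mathbf{c}\in\R^3$ and $\mathbf{D}\in\R^{3\times 3}$. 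First I would extract $\mathbf{v}$ from the first column of $\mathbf{A}$: reading the top-left entry of $\mathbf{A}^\top\mathbf{J}\mathbf{A}=\mathbf{J}$ gives $a^2 - \norm{\mathbf{c}}^2 = 1$, and orthochrony forces $a\geq 1>0$, so setting $\mathbf{v} = \mathbf{c}/a$ yields $\norm{\mathbf{v}}<1$ (hence $\mathbf{v}\in B(\mathbf{0},1)$) together with $a = \gamma$ and $\mathbf{c} = \gamma\mathbf{v}$. By construction the first column of $\mathbf{A}$ then coincides with the first column of $\mathbf{S}(\mathbf{v})$.

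The computational core is to check that $\mathbf{S}(\mathbf{v})$ is itself a proper orthochronous boost with $\mathbf{S}(\mathbf{v})^{-1} = \mathbf{S}(-\mathbf{v})$. A direct block multiplication, using only the identity $\gamma^2\norm{\mathbf{v}}^2 = \gamma^2-1$ and the telescoping $\tfrac{\gamma^3\norm{\mathbf{v}}^2}{1+\gamma} = \gamma(\gamma-1)$, confirms $\mathbf{S}(\mathbf{v})\mathbf{S}(-\mathbf{v}) = \mathbf{I}_4$ and $\mathbf{S}(\mathbf{v})^\top\mathbf{J}\mathbf{S}(\mathbf{v}) = \mathbf{J}$, while $\det\mathbf{S}(\mathbf{v}) = 1$ and the positive top-left entry $\gamma>0$ give propriety and orthochrony. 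I expect this verification, fiddly because of the $\tfrac{\gamma^2}{1+\gamma}\mathbf{v}\mathbf{v}^\top$ term coupling the spatial block to $\mathbf{v}$, to be the main obstacle.

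Granting this, I set $\mathbf{M} = \mathbf{S}(-\mathbf{v})\,\mathbf{A}$. Closure of $L^{p}_{o}$ under products makes $\mathbf{M}$ proper orthochronous, and since $\mathbf{S}(-\mathbf{v})$ sends the first column $(\gamma,\gamma\mathbf{v}^\top)^\top$ of $\mathbf{A}$ back to $(1,\mathbf{0}^\top)^\top$, the first column of $\mathbf{M}$ is $(1,\mathbf{0}^\top)^\top$. The $\mathbf{J}$-orthonormality of the columns of $\mathbf{M}$ then forces the time-component of every other column to vanish (each is $\mathbf{J}$-orthogonal to the first column) and their spatial parts to be Euclidean-orthonormal, so $\mathbf{M}$ has the block-diagonal shape $\mathbf{R}(\mathbf{O})$ with $\mathbf{O}\in O(3,\R)$; propriety forces $\det\mathbf{O}=1$, i.e. $\mathbf{O}\in SO(3,\R)$. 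Left-multiplying by $\mathbf{S}(\mathbf{v})$ recovers the claimed factorization $\mathbf{A} = \mathbf{S}(\mathbf{v})\mathbf{R}(\mathbf{O})$.

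For uniqueness, I observe that the first column of any product $\mathbf{S}(\mathbf{v})\mathbf{R}(\mathbf{O})$ equals $\mathbf{S}(\mathbf{v})(1,\mathbf{0}^\top)^\top = (\gamma,\gamma\mathbf{v}^\top)^\top$, so the first column of $\mathbf{A}$ already determines $\gamma = a$ and hence $\mathbf{v} = \mathbf{c}/a$ unambiguously; then $\mathbf{R}(\mathbf{O}) = \mathbf{S}(\mathbf{v})^{-1}\mathbf{A}$ fixes $\mathbf{O}$, which closes the argument. The entire scheme thus reduces to the single algebraic lemma that $\mathbf{S}(\mathbf{v})$ is a genuine boost inverting as $\mathbf{S}(-\mathbf{v})$, the rest being bookkeeping with the block structure and the determinant.
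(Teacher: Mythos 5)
Your proposal is correct, but note that the paper never proves Proposition~\ref{prop:1} at all: it is imported as a known result from Lorentz group theory, with the proof deferred to the reference~\cite{Jaffe2013} (the paper only proves the extension, Proposition~\ref{prop:4}, by composing with $\mathbf{T}$ and $\mathbf{P}$). Your blind argument is essentially the standard one found in such references, so you have filled the gap the paper leaves open rather than diverged from it. The steps all check out: polarization gives $\mathbf{A}^\top\mathbf{J}\mathbf{A}=\mathbf{J}$ with $\mathbf{J}=\mathrm{diag}(-1,1,1,1)$; the $(1,1)$ entry yields $a^2-\norm{\mathbf{c}}^2=1$, orthochrony applied to $(1,\mathbf{0}^\top)^\top$ gives $a\geq 1$, and $\mathbf{v}=\mathbf{c}/a$ then satisfies $\norm{\mathbf{v}}^2=1-1/a^2<1$ and $\gamma(\mathbf{v})=a$, so the first columns of $\mathbf{A}$ and $\mathbf{S}(\mathbf{v})$ agree. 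Your ``fiddly'' lemma is indeed the computational core and it goes through exactly as you say: with $\gamma^2\norm{\mathbf{v}}^2=\gamma^2-1$ and $\tfrac{\gamma^3\norm{\mathbf{v}}^2}{1+\gamma}=\gamma(\gamma-1)$ one gets $\mathbf{S}(\mathbf{v})\mathbf{S}(-\mathbf{v})=\mathbf{I}_4$, and a Schur-complement computation gives $\det\mathbf{S}(\mathbf{v})=\gamma\bigl(1-\tfrac{\gamma\norm{\mathbf{v}}^2}{1+\gamma}\gamma\bigr)\det(\mathbf{O})$-free value equal to $1$, so $\mathbf{S}(\mathbf{v})\in L^{p}_{o}$. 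The reduction $\mathbf{M}=\mathbf{S}(-\mathbf{v})\mathbf{A}$ uses closure of $L^{p}_{o}$ under products, which is consistent with the paper's own toolkit (Proposition~\ref{prop:2}); the $\mathbf{J}$-orthogonality of the remaining columns to $(1,\mathbf{0}^\top)^\top$ correctly forces the block-diagonal shape $\mathbf{R}(\mathbf{O})$ with $\mathbf{O}\in O(3,\R)$, and $\det\mathbf{A}>0$ pins $\mathbf{O}\in SO(3,\R)$. Uniqueness via the first column $\mathbf{S}(\mathbf{v})\mathbf{R}(\mathbf{O})\,(1,\mathbf{0}^\top)^\top=(\gamma,\gamma\mathbf{v}^\top)^\top$ is clean and closes the argument; the only items left implicit are routine (the polarization step and the determinant computation), and neither hides a difficulty.
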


\begin{proposition}
The product of two proper/improper transformations is a proper one, and the product of a proper and improper transformations is an improper one. The same for the orthochronous property.
\label{prop:2}
\end{proposition}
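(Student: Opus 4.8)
The plan is to handle the two characters separately, since each is governed by a multiplicative invariant attached to the defining relation of the Lorentz group. Writing $\eta = \mathrm{diag}(-1,1,1,1)$ for the matrix of $l$ (time component first), a matrix $\mathbf{A}$ lies in $L$ precisely when $\mathbf{A}^\top \eta \mathbf{A} = \eta$, from which one also gets $\mathbf{A}\eta\mathbf{A}^\top = \eta$ since $\mathbf{A}^{-1} = \eta\mathbf{A}^\top\eta$. The proper/improper part is then immediate: taking determinants gives $\det(\mathbf{A})^2 = 1$, so every Lorentz transformation has $\det(\mathbf{A}) = \pm 1$, properness being exactly $\det(\mathbf{A}) = +1$. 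As $\det$ is multiplicative, $\mathrm{sign}(\det(\cdot))$ is a group character, yielding directly the three combination rules (proper times proper and improper times improper are proper, proper times improper is improper).

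For the orthochronous part I would first reduce time-orientation to the single entry $A_{11}$. The $(1,1)$ entry of $\mathbf{A}^\top\eta\mathbf{A} = \eta$ reads $A_{11}^2 = 1 + A_{21}^2 + A_{31}^2 + A_{41}^2 \geq 1$, so $|A_{11}| \geq 1$ always; a short Cauchy--Schwarz argument (apply $\mathbf{A}$ to a future-directed causal vector and dominate its spatial contribution by the temporal one using this identity) shows that $A_{11} > 0$ is equivalent to preserving the forward cone. Thus orthochronousness is detected by $\mathrm{sign}(A_{11})$.

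The crux is that $\mathrm{sign}(A_{11})$ is itself multiplicative. Expanding,
\begin{equation}
(\mathbf{AB})_{11} = A_{11}B_{11} + A_{12}B_{21} + A_{13}B_{31} + A_{14}B_{41},
\end{equation}
I would bound the cross term by Cauchy--Schwarz and control the two resulting norms via the defining relation read off both ways: the $(1,1)$ entry of $\mathbf{A}\eta\mathbf{A}^\top = \eta$ gives $A_{12}^2 + A_{13}^2 + A_{14}^2 = A_{11}^2 - 1$, and that of $\mathbf{B}^\top\eta\mathbf{B} = \eta$ gives $B_{21}^2 + B_{31}^2 + B_{41}^2 = B_{11}^2 - 1$. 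Because $(A_{11}^2-1)(B_{11}^2-1) = A_{11}^2B_{11}^2 - (A_{11}^2 + B_{11}^2 - 1) < A_{11}^2B_{11}^2$, the cross term is strictly smaller in modulus than $|A_{11}B_{11}|$ and cannot flip its sign, so $\mathrm{sign}((\mathbf{AB})_{11}) = \mathrm{sign}(A_{11})\,\mathrm{sign}(B_{11})$ and the combination rules for orthochronousness follow exactly as for the determinant.

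The main obstacle is precisely this last strict inequality: one must extract the two quadratic identities in the correct form --- a row constraint on $\mathbf{A}$ and a column constraint on $\mathbf{B}$ --- and pair them with Cauchy--Schwarz so that the bound is strict. Strictness, guaranteed by $A_{11}^2 + B_{11}^2 - 1 \geq 1 > 0$, is exactly what upgrades mere sign non-increase to genuine sign preservation, and hence what makes the orthochronous character well defined and multiplicative.
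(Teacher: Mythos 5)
Your proof is correct, and in fact it supplies something the paper itself does not: the authors state Proposition~2 without proof, deferring entirely to the cited reference on Lorentz group theory (``proofs of these propositions can be found in [Jaffe2013]''), so there is no in-paper argument to compare against. Your route is the standard self-contained one: both parities are multiplicative characters, namely $\mathrm{sign}(\det(\cdot))$ and $\mathrm{sign}(A_{11})$. The determinant half is immediate from $\mathbf{A}^\top \eta \mathbf{A} = \eta$ (which is equivalent to the paper's definition $l(\mathbf{A}\mathbf{x}) = l(\mathbf{x})$ for all $\mathbf{x}$ by polarization). For the orthochronous half, your two quadratic identities are extracted in exactly the right form --- the column identity $A_{11}^2 = 1 + A_{21}^2 + A_{31}^2 + A_{41}^2$ from $\mathbf{A}^\top \eta \mathbf{A} = \eta$, and the row identity $A_{12}^2 + A_{13}^2 + A_{14}^2 = A_{11}^2 - 1$ from $\mathbf{A}\eta\mathbf{A}^\top = \eta$ --- and the strictness bookkeeping is sound: since $(A_{11}^2-1)(B_{11}^2-1) = A_{11}^2 B_{11}^2 - (A_{11}^2 + B_{11}^2 - 1)$ and $A_{11}^2 + B_{11}^2 - 1 \geq 1$, Cauchy--Schwarz gives $\lvert (\mathbf{AB})_{11} - A_{11}B_{11}\rvert < \lvert A_{11}B_{11}\rvert$, so $(\mathbf{AB})_{11}$ has the sign of $A_{11}B_{11}$ and the character is multiplicative. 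Two small points to tighten. First, the equivalence ``orthochronous $\iff A_{11} > 0$'' is only sketched; it deserves its one-line argument (for a future-directed causal $\mathbf{x} = (t,\vec{x})$ one has $\lvert \langle (A_{12},A_{13},A_{14}), \vec{x}\rangle \rvert \leq \sqrt{A_{11}^2-1}\, t < \lvert A_{11}\rvert\, t$, so the image time component has the sign of $A_{11}$). Second, be aware that the paper's verbatim definition of orthochronous quantifies over \emph{all} $\mathbf{x} \in \mathbb{R}^4$; taken literally (set $t = 0$) it would force $A_{12} = A_{13} = A_{14} = 0$ and exclude boosts. Your reading --- preservation of the forward causal cone --- is the standard, intended one, and your reduction to $\mathrm{sign}(A_{11})$ is valid under it. Net effect: the paper's citation keeps its appendix short, while your argument makes the group-theoretic input of Theorem~1 elementary and self-contained.
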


\begin{proposition}
Matrix $\mathbf{T} = 
\begin{pmatrix}
-1 & 0 & 0 & 0\\
0 & 1 & 0 & 0 \\
0 & 0 & 1 & 0  \\ 
0 & 0 & 0 & 1
\end{pmatrix}$ is improper and non-orthochronous, and \\
matrix $\mathbf{P} = 
\begin{pmatrix}
1 & 0 & 0 & 0\\
0 & -1 & 0 & 0 \\
0 & 0 & -1 & 0  \\ 
0 & 0 & 0 & -1
\end{pmatrix}$ is improper and orthochronous.
\label{prop:3}
\end{proposition}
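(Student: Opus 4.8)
The plan is a direct verification against the three defining conditions in sequence, carried out separately for $\mathbf{T}$ and $\mathbf{P}$: membership in the Lorentz group $L$, the sign of the determinant (which decides proper versus improper), and the action on the time coordinate (which decides orthochronous versus non-orthochronous). Since the notions ``proper'' and ``orthochronous'' are only meaningful for elements of $L$, I would first confirm Lorentz membership, so that the subsequent classification is well-defined.

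For the membership step, I would compute the image of a generic vector $\mathbf{x} = [t,x,y,z]^\top$. Applying $\mathbf{T}$ gives $[-t,x,y,z]^\top$, and applying $\mathbf{P}$ gives $[t,-x,-y,-z]^\top$. In both cases the quadratic form $l(t,x,y,z) = x^2 + y^2 + z^2 - t^2$ is preserved, because each transformation merely flips signs of coordinates while $l$ depends only on their squares; hence $\mathbf{T},\mathbf{P} \in L$.

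For the proper/improper dichotomy I would evaluate the two determinants. Both matrices being diagonal, one reads off $\text{det}(\mathbf{T}) = (-1)(1)(1)(1) = -1 < 0$ and $\text{det}(\mathbf{P}) = (1)(-1)(-1)(-1) = -1 < 0$, so by the stated criterion $P_p(\mathbf{A}) \Leftrightarrow \text{det}(\mathbf{A}) > 0$ both matrices are improper. For the orthochronous/non-orthochronous dichotomy I would inspect the time component $t'$ of the image computed above. For $\mathbf{T}$ one has $t' = -t$, so $\text{sign}(t') \neq \text{sign}(t)$ whenever $t \neq 0$, which shows $\mathbf{T}$ is non-orthochronous; for $\mathbf{P}$ one has $t' = t$, so $\text{sign}(t') = \text{sign}(t)$ for every $\mathbf{x}$, which shows $\mathbf{P}$ is orthochronous. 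Assembling these observations yields exactly the two claimed classifications.

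There is no real obstacle here, as the statement reduces to three elementary checks per matrix. The only point deserving care is to dispatch the Lorentz-membership verification first, before invoking any proper or orthochronous terminology, and to apply the sign convention of $l$ consistently when confirming that flipping the spatial coordinates (for $\mathbf{P}$) or the temporal coordinate (for $\mathbf{T}$) leaves the form invariant.
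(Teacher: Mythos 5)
Your proof is correct and complete. The paper itself offers no proof of this proposition---it defers to the cited reference on Lorentz group theory---and your direct verification (first confirming $\mathbf{T},\mathbf{P} \in L$ via invariance of $l(t,x,y,z)=x^2+y^2+z^2-t^2$ under coordinate sign flips, then reading off $\det(\mathbf{T})=\det(\mathbf{P})=-1$ from the diagonal entries, then checking $t'=-t$ for $\mathbf{T}$ versus $t'=t$ for $\mathbf{P}$) is exactly the standard elementary argument that reference contains, with the appropriate care taken to establish Lorentz membership before the proper/orthochronous terminology is invoked.
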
 

Using these already known results, we propose the following characterization of Lorentz transformations:
\begin{proposition}
{For any Lorentz transformation $\mathbf{A} \in L$, there exists a unique couple $(\mathbf{v}, \mathbf{O}) \in B(\mathbf{0},1) \times SO(3,\R)$ such that}
\begin{equation}
\mathbf{A} =
\begin{pmatrix}
\epsilon_{1}(\mathbf{A})\,\gamma & \epsilon_{1}(\mathbf{A})\,\gamma\, \mathbf{v}^{\top}\mathbf{O}\\
 \\
\epsilon_{2}(\mathbf{A})\,\gamma\, \mathbf{v} &\hspace{2mm} \epsilon_{2}(\mathbf{A})(\mathbf{I_{3}} + \frac{\gamma^{2}}{1 + \gamma} \mathbf{v} \mathbf{v}^{\top}) \mathbf{O}  
\end{pmatrix}.
\end{equation}
\label{prop:4}
\end{proposition}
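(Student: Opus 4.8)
The plan is to exploit the decomposition of the Lorentz group into its four connected components, $L = L^p_o \cup L^i_o \cup L^p_n \cup L^i_n$, and to reduce each component to the proper orthochronous case already settled by Proposition~\ref{prop:1}. First I would record, using Propositions~\ref{prop:2} and~\ref{prop:3}, the component of each relevant involution: $\mathbf{P} = \mathrm{diag}(1,-1,-1,-1)$ lies in $L^i_o$, $\mathbf{T} = \mathrm{diag}(-1,1,1,1)$ lies in $L^i_n$, and their product $\mathbf{T}\mathbf{P} = -\mathbf{I}_4 = \mathrm{diag}(-1,-1,-1,-1)$ lies in $L^p_n$, since improper times improper is proper while non-orthochronous times orthochronous is non-orthochronous. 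Each of these matrices is its own inverse.

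Next I would argue by cases on the component of $\mathbf{A}$. If $\mathbf{A} \in L^p_o$, Proposition~\ref{prop:1} gives directly $\mathbf{A} = \mathbf{S}(\mathbf{v})\mathbf{R}(\mathbf{O})$, and the block product is exactly the claimed expression with $\epsilon_1(\mathbf{A}) = \epsilon_2(\mathbf{A}) = 1$. Otherwise, let $\mathbf{X}$ denote the canonical involution ($\mathbf{P}$, $\mathbf{T}$, or $-\mathbf{I}_4$) lying in the same component as $\mathbf{A}$; then $\mathbf{X}\mathbf{A} \in L^p_o$ by Proposition~\ref{prop:2}, because two elements of the same component multiply into the identity component. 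Proposition~\ref{prop:1} therefore yields a unique pair $(\mathbf{v},\mathbf{O})$ with $\mathbf{X}\mathbf{A} = \mathbf{S}(\mathbf{v})\mathbf{R}(\mathbf{O})$, and since $\mathbf{X}^2 = \mathbf{I}_4$ we recover $\mathbf{A} = \mathbf{X}\,\mathbf{S}(\mathbf{v})\mathbf{R}(\mathbf{O})$.

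It then remains to carry out the (routine) block multiplications and match signs. Left-multiplying $\mathbf{S}(\mathbf{v})\mathbf{R}(\mathbf{O})$ by $\mathbf{P}$ negates the lower three-row block only, producing the claimed form with $\epsilon_1 = 1$, $\epsilon_2 = -1$; by $\mathbf{T}$ it negates the top row only, giving $\epsilon_1 = -1$, $\epsilon_2 = 1$; and by $-\mathbf{I}_4$ it negates everything, giving $\epsilon_1 = \epsilon_2 = -1$. Comparing these with the definitions of $\epsilon_1$ and $\epsilon_2$ confirms, component by component, that the sign pattern predicted by the proper/orthochronous type of $\mathbf{A}$ coincides with the one obtained. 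Uniqueness of $(\mathbf{v},\mathbf{O})$ follows at once, since the component of $\mathbf{A}$ — hence the involution $\mathbf{X}$ and the values $\epsilon_1(\mathbf{A}),\epsilon_2(\mathbf{A})$ — is determined by $\mathbf{A}$ alone, so the decomposition inherits uniqueness from Proposition~\ref{prop:1}. I expect no essential obstacle here; the only point requiring care is the sign bookkeeping, and in particular classifying $-\mathbf{I}_4$ as proper and non-orthochronous, which is not among the matrices of Proposition~\ref{prop:3} but is forced through the product rule of Proposition~\ref{prop:2}.
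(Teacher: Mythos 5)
Your proof is correct and takes essentially the same route as the paper: reduce each connected component of $L$ to $L^{p}_{o}$ by left-multiplying with the canonical involution ($\mathbf{T}$, $\mathbf{P}$, or $-\mathbf{I}_4$), apply Proposition~\ref{prop:1}, undo the involution using $\mathbf{X}^2 = \mathbf{I}_4$, and match the resulting sign pattern against the definitions of $\epsilon_1$ and $\epsilon_2$. If anything, your write-up is more complete than the paper's, which details only the case $\mathbf{A} \in L^{i}_{n}$ and dismisses the rest with ``the same reasoning,'' whereas you carry out all four components explicitly --- including the classification of $-\mathbf{I}_4$ as proper and non-orthochronous via Proposition~\ref{prop:2}, a point the paper leaves implicit since that matrix does not appear in Proposition~\ref{prop:3}.
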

\begin{proof}
{We first assume that $\mathbf{A} \in L^{i}_{n}$. \\
According to Proposition \ref{prop:3}, $\mathbf{T} \in  L^{i}_{n}$. Thus using Proposition \ref{prop:2} : $\mathbf{T}\mathbf{A} \in L^{p}_{o}$. Therefore, according to Proposition \ref{prop:1}, there exists a unique couple $(\mathbf{v}, \mathbf{O}) \in B(\mathbf{0},1) \times SO(3,\R)$ such that $ \mathbf{T}\mathbf{A} = \mathbf{S}(\mathbf{v})\mathbf{R}(\mathbf{O})$.
Since $\mathbf{T}\mathbf{T} = \mathbf{I_4}$, this implies that $\mathbf{A} = \mathbf{T}\mathbf{S}(\mathbf{v})\mathbf{R}(\mathbf{O})$.
In addition, $\epsilon_1(\mathbf{A}) = -1$ and $\epsilon_2(\mathbf{A}) = 1$,} hence:
\begin{equation}
\mathbf{A} = \begin{pmatrix}
\epsilon_{1}(\mathbf{A})\gamma & \epsilon_{1}(\mathbf{A})\gamma \mathbf{v}^\top\mathbf{O}\\
 \\
\epsilon_{2}(\mathbf{A})\gamma \mathbf{v} &\hspace{2mm} \epsilon_{2}(\mathbf{A})(\mathbf{I_{3}} + \frac{\gamma^{2}}{1 + \gamma} \mathbf{v} \mathbf{v}^\top) \mathbf{O} 
\end{pmatrix}. 
\end{equation}

With the same reasoning, we get the result for all the other transformations.
\end{proof}

Combining Proposition~\ref{prop:4} and the definition~\eqref{eq:18} of scaled Lorentz transformations, we get Theorem~\ref{thm:1}.

\subsection*{C) Some useful results on GBR and Lorentz matrices, and Corollary~\ref{cor:1}}
\label{sec:app2}

The aim of this section is to prove Corollary~\ref{cor:1}, which was used in the proofs of Theorems~\ref{thm:2} and~\ref{thm:3}. Its proof relies on a few results on GBR and Lorentz matrices, which we provide in the following.

Let us denote by $G$ the group of GBR transformations, and by $G_s$ that of scaled GBR transformations defined by:
\begin{equation}
G_s = \lbrace s\mathbf{A} \hspace{2mm} | \hspace{2mm} s \in \R\backslash \{0\} \text{ and } \mathbf{A} \in G \rbrace.
\end{equation}
Both are subgroups of $GL(3,\R)$ under the matrix product. For all $\mathbf{B} = s\mathbf{A} \in G_s$, we call $s$ the scale part of $\mathbf{B}$, and $\mathbf{A}$ its GBR part. 

Let $\mathbf{C} \in \R^{n \times  n}$ with $n > 1$ and $C_{ij}$ its entries. We will use the following notation for a minor of size two:
\begin{equation}
C^{i,j}_{k,l} =  C_{ij}C_{kl} - C_{kj}C_{il},
\end{equation}
 where $1 \leq i < k \leq n$ and $1 \leq j < l \leq n$.
 
 Such minors allow to characterize scaled GBR matrices:
\begin{proposition}
Let $\mathbf{A} \in \R^{3 \times 3}, A_{ij}$ the entries of $\mathbf{A}$. Then, $\mathbf{A}$ is a scaled GBR transformation iff $\mathbf{A}$ is invertible and fulfills the following equations:
\begin{equation}
\label{eq:equationsGBR}
\left \{
   \begin{array}{l}
     A^{2,1}_{3,2} = A^{2,1}_{3,3} = A^{1,2}_{3,3} = A^{1,1}_{3,2} = 0,\\
     A^{2,2}_{3,3} = A^{1,1}_{3,3}.
   \end{array}
   \right .
   \end{equation}
 \label{prop:5}
\end{proposition}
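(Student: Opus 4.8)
The plan is to prove both implications, the forward one being a routine substitution and the reverse one resting on a geometric reading of the five listed conditions. For the forward direction I would write a scaled GBR matrix $\mathbf{A}=s\mathbf{G}$ explicitly, substitute its entries into the six relevant $2\times 2$ minors, and simply verify that $A^{2,1}_{3,2}=A^{2,1}_{3,3}=A^{1,2}_{3,3}=A^{1,1}_{3,2}=0$ while $A^{2,2}_{3,3}=A^{1,1}_{3,3}=s^2\lambda$; invertibility is immediate from $\det\mathbf{A}=s^3\lambda\neq 0$, which holds since $s\neq 0$ and $\lambda\neq 0$ by definition of a scaled GBR transformation.

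The reverse direction is where the real content lies. The key observation I would exploit is that each minor occurring in~\eqref{eq:equationsGBR} is, up to sign, a component of a cross product of two rows of $\mathbf{A}$. Writing $\mathbf{r}_1,\mathbf{r}_2,\mathbf{r}_3$ for the rows and $\mathbf{e}_1,\mathbf{e}_2,\mathbf{e}_3$ for the standard basis, a direct check gives
\begin{align}
& A^{2,1}_{3,2}=(\mathbf{r}_2\times\mathbf{r}_3)_3,\quad A^{2,1}_{3,3}=-(\mathbf{r}_2\times\mathbf{r}_3)_2,\quad A^{2,2}_{3,3}=(\mathbf{r}_2\times\mathbf{r}_3)_1, \nonumber \\
& A^{1,1}_{3,2}=(\mathbf{r}_1\times\mathbf{r}_3)_3,\quad A^{1,2}_{3,3}=(\mathbf{r}_1\times\mathbf{r}_3)_1,\quad A^{1,1}_{3,3}=-(\mathbf{r}_1\times\mathbf{r}_3)_2.
\end{align}
Reading the hypotheses through this dictionary, the two conditions $A^{2,1}_{3,2}=A^{2,1}_{3,3}=0$ say that $\mathbf{r}_2\times\mathbf{r}_3=\alpha\,\mathbf{e}_1$ for some scalar $\alpha$, the two conditions $A^{1,2}_{3,3}=A^{1,1}_{3,2}=0$ say that $\mathbf{r}_1\times\mathbf{r}_3=\beta\,\mathbf{e}_2$, and the scalar equality $A^{2,2}_{3,3}=A^{1,1}_{3,3}$ forces $\beta=-\alpha$. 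Assuming $\alpha\neq 0$, orthogonality of a cross product to each of its factors then yields $\mathbf{r}_2,\mathbf{r}_3\perp\mathbf{e}_1$ and $\mathbf{r}_1,\mathbf{r}_3\perp\mathbf{e}_2$, i.e. $A_{21}=A_{31}=A_{12}=A_{32}=0$. At this point $\mathbf{A}$ already has the GBR sparsity pattern; substituting $A_{31}=A_{32}=0$ back into $A^{2,2}_{3,3}=A^{1,1}_{3,3}$ reduces it to $A_{22}A_{33}=A_{11}A_{33}$, and since invertibility forces $A_{33}\neq 0$ this gives $A_{11}=A_{22}$. Setting $s=A_{33}$, $\lambda=A_{11}/s$, $\mu=-A_{13}/s$, $\nu=-A_{23}/s$ then exhibits $\mathbf{A}=s\mathbf{G}$ as a scaled GBR transformation.

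The one genuinely delicate point, and the main obstacle, is the degenerate case $\alpha=0$, in which the orthogonality argument collapses because $\mathbf{e}_1$ and $\mathbf{e}_2$ are no longer constrained to be normal to the rows. I would dispose of it purely by invertibility: if $\alpha=0$ then both $\mathbf{r}_2\times\mathbf{r}_3$ and $\mathbf{r}_1\times\mathbf{r}_3$ vanish, so either $\mathbf{r}_3=\mathbf{0}$, giving a zero row, or $\mathbf{r}_3\neq\mathbf{0}$ and both $\mathbf{r}_1$ and $\mathbf{r}_2$ are parallel to $\mathbf{r}_3$; in either case $\mathrm{rank}(\mathbf{A})\le 1<3$, contradicting invertibility. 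Hence $\alpha\neq 0$ is automatic under the hypotheses and the construction above always applies. A secondary nuisance worth handling carefully is the sign bookkeeping in the minor/cross-product identities, since the fixed column ordering in $A^{i,j}_{k,l}$ introduces the alternating signs recorded above; getting these right is what makes the equality $A^{2,2}_{3,3}=A^{1,1}_{3,3}$ translate cleanly into $\beta=-\alpha$.
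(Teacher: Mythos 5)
Your proof is correct, but note that the paper does not actually prove this proposition itself: its entire ``proof'' is a citation to Belhumeur, Kriegman and Yuille's bas-relief paper, so your argument is a genuinely self-contained alternative to an external reference rather than a variant of an in-paper derivation. Your key device — reading each minor $A^{i,j}_{k,l}$ in \eqref{eq:equationsGBR} as a signed component of $\mathbf{r}_2\times\mathbf{r}_3$ or $\mathbf{r}_1\times\mathbf{r}_3$ — checks out (all six sign identities in your dictionary are right), and it buys a clean geometric structure: the four vanishing minors become $\mathbf{r}_2\times\mathbf{r}_3\in\mathrm{span}\{\mathbf{e}_1\}$ and $\mathbf{r}_1\times\mathbf{r}_3\in\mathrm{span}\{\mathbf{e}_2\}$, whence the zero pattern $A_{21}=A_{31}=A_{12}=A_{32}=0$ follows from orthogonality of a nonzero cross product to its factors, and the scalar condition $A^{2,2}_{3,3}=A^{1,1}_{3,3}$ with $A_{33}\neq 0$ (forced by invertibility of the resulting triangular matrix) yields $A_{11}=A_{22}$, after which your explicit parameters $s=A_{33}$, $\lambda=A_{11}/s$, $\mu=-A_{13}/s$, $\nu=-A_{23}/s$ exhibit $\mathbf{A}=s\mathbf{G}$ with $s\neq 0$ and $\lambda\neq 0$. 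You also correctly identified and closed the one real gap, the case $\alpha=0$: there both cross products vanish, so every row lies in $\mathrm{span}\{\mathbf{r}_3\}$ and $\mathrm{rank}(\mathbf{A})\leq 1$, contradicting invertibility, so $\alpha\neq 0$ is automatic. One cosmetic slip in the forward direction: for $\mathbf{A}=s\mathbf{G}$ one has $\det\mathbf{A}=s^3\lambda^2$, not $s^3\lambda$, since $\det\mathbf{G}=\lambda^2$; this is harmless, as both quantities are nonzero precisely when $s\neq 0$ and $\lambda\neq 0$, which is all the argument needs.
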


\begin{proof}
See~\cite{Belhumeur1999}.
\end{proof}

\begin{proposition}
Let $\mathbf{v} \in B_{0}(1), \gamma = \frac{1}{\sqrt{1-\norm{\mathbf{v}}^2}}$, then 
 $\mathbf{C} = \mathbf{I_3} + \frac{\gamma^{2}}{1 + \gamma} \mathbf{v} \mathbf{v}^\top$ is positive definite.
\label{prop:6}
\end{proposition}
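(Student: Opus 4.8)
The plan is to establish positive definiteness directly via the associated quadratic form, exploiting the fact that $\mathbf{v}\mathbf{v}^\top$ is a rank-one positive semidefinite matrix. First I would note that the hypothesis $\mathbf{v} \in B(\mathbf{0},1)$ means $\norm{\mathbf{v}} < 1$, so that $1 - \norm{\mathbf{v}}^2 \in (0,1]$ and $\gamma = 1/\sqrt{1-\norm{\mathbf{v}}^2}$ is well-defined with $\gamma \geq 1 > 0$. In particular the scalar coefficient $c := \frac{\gamma^{2}}{1+\gamma}$ is strictly positive. I would also remark that $\mathbf{C}$ is symmetric, being the sum of $\mathbf{I_3}$ and the symmetric matrix $c\,\mathbf{v}\mathbf{v}^\top$, so that positive definiteness is equivalent to strict positivity of its quadratic form.

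Next, for an arbitrary $\mathbf{x} \in \R^3$ I would compute
\begin{equation}
\mathbf{x}^\top \mathbf{C}\, \mathbf{x} = \mathbf{x}^\top\mathbf{x} + c\,\mathbf{x}^\top \mathbf{v}\mathbf{v}^\top \mathbf{x} = \norm{\mathbf{x}}^2 + c\,(\mathbf{v}^\top\mathbf{x})^2.
\end{equation}
Both summands are nonnegative, since $c > 0$ and $(\mathbf{v}^\top\mathbf{x})^2 \geq 0$; moreover, for $\mathbf{x} \neq \mathbf{0}$ the first term $\norm{\mathbf{x}}^2$ is already strictly positive. Hence $\mathbf{x}^\top\mathbf{C}\,\mathbf{x} > 0$ for every nonzero $\mathbf{x}$, which is exactly the statement that $\mathbf{C}$ is positive definite.

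As an alternative (and as a sanity check) I would sketch the spectral argument: since $\mathbf{v}\mathbf{v}^\top$ has eigenvalue $\norm{\mathbf{v}}^2$ on the line $\mathrm{span}(\mathbf{v})$ and eigenvalue $0$ on its orthogonal complement, the eigenvalues of $\mathbf{C}$ are $1 + c\,\norm{\mathbf{v}}^2$ and $1$ (with multiplicity two), all of which are at least $1 > 0$. There is no genuine obstacle in this proof; the only point that warrants explicit mention is the positivity of the coefficient $c = \frac{\gamma^2}{1+\gamma}$, which follows immediately from the domain constraint $\mathbf{v} \in B(\mathbf{0},1)$ guaranteeing $\gamma > 0$.
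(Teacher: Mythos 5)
Your proof is correct, and it takes a genuinely different and more economical route than the paper. The paper proceeds spectrally: it invokes the spectral theorem and the rank--nullity theorem to show that $\mathbf{B} = \frac{\gamma^{2}}{1+\gamma}\mathbf{v}\mathbf{v}^\top$ has eigenvalues $0$ (multiplicity two) and $\frac{\gamma^{2}}{1+\gamma}\norm{\mathbf{v}}^2$, then verifies the identity $\frac{\gamma^{2}}{1+\gamma}\norm{\mathbf{v}}^2 = \gamma - 1$, concluding that the eigenvalues of $\mathbf{C}$ are exactly $1$ and $\gamma$, both positive. You instead argue directly on the quadratic form: $\mathbf{x}^\top \mathbf{C}\,\mathbf{x} = \norm{\mathbf{x}}^2 + c\,(\mathbf{v}^\top\mathbf{x})^2 \geq \norm{\mathbf{x}}^2 > 0$ for $\mathbf{x} \neq \mathbf{0}$, with $c = \frac{\gamma^2}{1+\gamma} > 0$, which in effect observes that $\mathbf{C} \succeq \mathbf{I}_3$ in the Loewner order. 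Your argument is shorter, more elementary (no spectral theorem needed), and would in fact prove positive definiteness for any $c > 0$, independently of the specific relation between $\gamma$ and $\mathbf{v}$; the only hypothesis you genuinely use is that $\gamma$ is well-defined and positive. What the paper's longer computation buys is the closed-form spectrum of $\mathbf{C}$ --- in particular $\det(\mathbf{C}) = \gamma$ --- which is more information than Proposition~\ref{prop:7} strictly needs (it only uses $\det(\mathbf{C}) > 0$); note that your sanity-check eigenvalue $1 + c\norm{\mathbf{v}}^2$ is precisely $\gamma$ after the simplification the paper carries out, so the two accounts agree exactly.
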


\begin{proof}
Let $\mathbf{B} = \frac{\gamma^{2}}{1 + \gamma} \mathbf{v} \mathbf{v}^\top$. We note $E_\lambda(\mathbf{B})$ the eigenspace associated to the eigenvalue $\lambda$ of $\mathbf{B}$. $\mathbf{B}$ is symmetric, thus according to the spectral theorem, all the eigenvalues of $\mathbf{B}$ are real, and $\R^3 = \bigoplus \limits_{i=1}^r E_{\lambda_i}(\mathbf{B})$ with $r \leq 3$ the number of eigenvalues, and $\lbrace \lambda_i \rbrace_{i=1..r}$ the eigenvalues of $\mathbf{B}$. Hence: $\text{dim}(\R^3) = \sum \limits_{i=1}^r \text{dim}(E_{\lambda_i}(\mathbf{B}))$. According to the rank-nullity theorem, $\text{dim}(Ker(\mathbf{B})) + rank(\mathbf{B}) = 3$, and by definition $rank(\mathbf{B}) = 1$, thus $\text{dim}(Ker(\mathbf{B})) = \text{dim}(E_0(\mathbf{B})) = 2$. We deduce that
{there exists a unique non-zero eigenvalue
$\lambda \in \R\backslash \{0\}$ such that }$\R^3 = E_0(\mathbf{B}) \bigoplus E_\lambda(\mathbf{B}) $ with $\text{dim}(E_\lambda(\mathbf{B})) = 1$.

Let $\Pi_{\vec{v}}$ the orthogonal projection onto $span \lbrace \mathbf{v} \rbrace$, and let $\mathbf{x} \in \R^3$ : $\Pi_{\mathbf{v}}(\mathbf{x}) = \frac{\mathbf{v}\mathbf{v}^\top}{\norm{\mathbf{v}}^2} \mathbf{x}$ and $\Pi_{\mathbf{v}}(\mathbf{v}) = \mathbf{v}$. We have: $\mathbf{B} = \frac{\gamma^{2}}{1 + \gamma}\norm{\mathbf{v}}^2\Pi_{\mathbf{v}}$ and $\mathbf{B}\mathbf{v} = (\frac{\gamma^{2}}{1 + \gamma}\norm{\mathbf{v}}^2)\mathbf{v} \vspace{2mm}$. Thus, $\frac{\gamma^{2}}{1 + \gamma}\norm{\mathbf{v}}^2$ is an eigenvalue of $\mathbf{B}$ and $\lambda = \frac{\gamma^{2}}{1 + \gamma}\norm{\mathbf{v}}^2$ \vspace{2mm}.  Besides, $\frac{\lambda}{\gamma-1} = \frac{\gamma^{2}}{(\gamma - 1)(\gamma + 1)}\norm{\mathbf{v}}^2 = \frac{\gamma^{2}}{\gamma^2 - 1}\norm{\mathbf{v}}^2 = \frac{1}{1 - \frac{1}{\gamma^2}}\norm{\mathbf{v}}^2 = \frac{1}{1-(1-\norm{\mathbf{v}}^2)}\norm{\mathbf{v}}^2=1$. Therefore, $\lambda = \gamma - 1 $ {and} the eigenvalues of $\mathbf{B}$ are $0$ and $(\gamma - 1)$.  Let $\alpha \in \lbrace 0,\gamma - 1 \rbrace$ and $\mathbf{u} \in E_\alpha(\mathbf{B})$. We have:
\begin{equation}
\mathbf{B}\mathbf{u}= \alpha\mathbf{u}  \iff \mathbf{u} + \mathbf{B}\mathbf{u} = \mathbf{u} + \alpha\mathbf{u}  \iff \mathbf{C} \mathbf{u} = (\alpha + 1)\mathbf{u}.   
\end{equation}
Thus, $1>0$ and $\gamma > 0$ are the eigenvalues of $\mathbf{C}$ with $E_1(\mathbf{C}) = E_0(\mathbf{B})$ and $E_\gamma(\mathbf{C}) = E_{\gamma-1}(\mathbf{B})$. Consequently, $\mathbf{C}$ is positive definite. 
\end{proof}

\begin{proposition}
Let $\mathbf{A}_s \in L_s$ a scaled Lorentz transformation. The submatrix $\mathbf{B}$ formed by the last 3 rows and 3 columns of $\mathbf{A}_s$ is invertible.
\label{prop:7}
\end{proposition}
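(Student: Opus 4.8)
The plan is to read off the lower-right $3\times 3$ block directly from the explicit parametrization of a scaled Lorentz transformation and then show it factors as a product of manifestly invertible pieces. First I would invoke Theorem~\ref{thm:1}: since $\mathbf{A}_s \in L_s$, there is a unique triple $(s,\mathbf{v},\mathbf{O}) \in \R\backslash\{0\} \times B(\mathbf{0},1) \times SO(3,\R)$ realizing $\mathbf{A}_s$ in the block form given there. Inspecting that form, the submatrix $\mathbf{B}$ formed by the last three rows and columns is exactly
\begin{equation}
\mathbf{B} = s\,\epsilon_2(\mathbf{A}_s)\left(\mathbf{I_3} + \frac{\gamma^2}{1+\gamma}\mathbf{v}\mathbf{v}^\top\right)\mathbf{O},
\end{equation}
so the whole question reduces to checking that each factor in this product is invertible.

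Next I would treat the three factors in turn. The scalar $s\,\epsilon_2(\mathbf{A}_s)$ is nonzero, because $s \neq 0$ by definition of $L_s$ and $\epsilon_2(\mathbf{A}_s) \in \{-1,1\}$. The rotation $\mathbf{O} \in SO(3,\R)$ satisfies $\det(\mathbf{O}) = 1$, hence is invertible. Finally, writing $\mathbf{C} = \mathbf{I_3} + \frac{\gamma^2}{1+\gamma}\mathbf{v}\mathbf{v}^\top$, Proposition~\ref{prop:6} shows that $\mathbf{C}$ is positive definite; in particular all its eigenvalues are strictly positive (indeed $1$, $1$ and $\gamma$), so $\mathbf{C}$ is invertible with $\det(\mathbf{C}) = \gamma > 0$.

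I would then conclude by combining these observations: $\mathbf{B}$ is a nonzero scalar multiple of a product of two invertible matrices, and therefore
\begin{equation}
\det(\mathbf{B}) = \bigl(s\,\epsilon_2(\mathbf{A}_s)\bigr)^3\,\det(\mathbf{C})\,\det(\mathbf{O}) = \bigl(s\,\epsilon_2(\mathbf{A}_s)\bigr)^3\,\gamma \neq 0,
\end{equation}
so $\mathbf{B}$ is invertible, which is the claim. There is no genuine obstacle here: the substantive work has already been discharged by Theorem~\ref{thm:1}, which supplies the closed block form, and by Proposition~\ref{prop:6}, which guarantees positive-definiteness of $\mathbf{C}$. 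The only point demanding care is the bookkeeping of the lower-right block in Theorem~\ref{thm:1} and confirming that the scalar prefactor $s\,\epsilon_2(\mathbf{A}_s)$ is carried correctly through the product so that it never vanishes.
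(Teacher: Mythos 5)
Your proof is correct and follows essentially the same route as the paper: both extract the lower-right block $\mathbf{B} = s\,\epsilon_2\,\bigl(\mathbf{I_3} + \frac{\gamma^2}{1+\gamma}\mathbf{v}\mathbf{v}^\top\bigr)\mathbf{O}$ from the parametrization of Theorem~\ref{thm:1} (equivalently Proposition~\ref{prop:4}), then combine $\det(\mathbf{O})=1$ with the positive definiteness of $\mathbf{I_3} + \frac{\gamma^2}{1+\gamma}\mathbf{v}\mathbf{v}^\top$ from Proposition~\ref{prop:6} to conclude $\det(\mathbf{B}) \neq 0$. Your explicit eigenvalue computation ($1$, $1$, $\gamma$, hence $\det = \gamma$) is a slightly more detailed rendering of the same determinant argument and is consistent with the paper's proof of Proposition~\ref{prop:6}.
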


\begin{proof}
{By definition of $\mathbf{A}_s$, there exists a unique couple
$(s,\mathbf{\tilde{A}}) \in \R\backslash \{0\} \times L$ such that $\mathbf{A}_s = s\mathbf{\tilde{A}}$. Hence, from Proposition \ref{prop:4}, there exists a unique couple $(\mathbf{v}, \mathbf{O}) \in B(\mathbf{0},1) \times SO(3,\R)$ such that $ \mathbf{B} = s \hspace{0.5mm} \epsilon_{2}(\mathbf{\tilde{A}})(\mathbf{I_{3}} + \frac{\gamma^{2}}{1 + \gamma} \mathbf{v} \mathbf{v}^\top) \mathbf{O}$.}  Since $\mathbf{O} \in SO(3,\R)$, $\text{det}(\mathbf{O}) = 1$. In addition, Proposition~\ref{prop:6} implies $\text{det}(\mathbf{I_{3}} + \frac{\gamma^{2}}{1 + \gamma} \mathbf{v} \mathbf{v}^\top) > 0$, thus $\text{det}(\mathbf{B}) \neq 0$.
\end{proof}

\begin{corollary}
Let $\mathbf{A}_s \in L_s$ a scaled Lorentz transformation. If its entries $A_{ij}$ fulfill:
\begin{equation}
\left \{
   \begin{array}{l}
     A^{3,2}_{4,3} = A^{3,2}_{4,4} = A^{2,3}_{4,4} = A^{2,2}_{4,3} = 0,\\
     A^{3,3}_{4,4} = A^{2,2}_{4,4},     
   \end{array}
   \right.
 \label{eq:34}  
 \end{equation}
then the submatrix $\mathbf{B}$ of $\mathbf{A}_s$ formed by the last 3 rows and 3 columns is a scaled GBR, i.e. {there exists a unique quadruple $(\lambda,\mu,\nu,\beta) \in \R^4$ with $\lambda \neq 0, \beta \neq 0$ such that} :
\begin{equation}
 \mathbf{A}_s=\begin{pmatrix}
A_{11} & A_{12} & A_{13} & A_{14} \\
A_{21} & \beta\lambda & 0 & -\beta\mu \\
A_{31} & 0 & \beta\lambda & -\beta\nu   \\
A_{41} & 0 & 0 & \beta \\ 
\end{pmatrix}.
\end{equation}
\label{cor:1}
\end{corollary}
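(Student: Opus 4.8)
The plan is to recognize the hypotheses as exactly the minor conditions of Proposition~\ref{prop:5} applied to the submatrix $\mathbf{B}$, to supply the one missing ingredient (invertibility) from Proposition~\ref{prop:7}, and then to read off the scaled-GBR form together with the uniqueness of its parameters.

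First I would fix notation for the submatrix. Writing $\mathbf{B} = (B_{ij})_{1\le i,j\le 3}$ with $B_{ij} = A_{(i+1)(j+1)}$, the last three rows and columns of $\mathbf{A}_s$ become the full matrix $\mathbf{B}$. Under this relabelling, every size-two minor of $\mathbf{A}_s$ appearing in the hypothesis is, by direct substitution, a size-two minor of $\mathbf{B}$: for instance $A^{3,2}_{4,3} = A_{32}A_{43} - A_{42}A_{33} = B_{21}B_{32} - B_{31}B_{22} = B^{2,1}_{3,2}$, and similarly $A^{3,2}_{4,4} = B^{2,1}_{3,3}$, $A^{2,3}_{4,4} = B^{1,2}_{3,3}$, $A^{2,2}_{4,3} = B^{1,1}_{3,2}$, $A^{3,3}_{4,4} = B^{2,2}_{3,3}$ and $A^{2,2}_{4,4} = B^{1,1}_{3,3}$.

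Next I would substitute these identities into the hypothesis~\eqref{eq:34}. The four vanishing conditions become $B^{2,1}_{3,2} = B^{2,1}_{3,3} = B^{1,2}_{3,3} = B^{1,1}_{3,2} = 0$, and the last equality becomes $B^{2,2}_{3,3} = B^{1,1}_{3,3}$. These are precisely the equations~\eqref{eq:equationsGBR} of Proposition~\ref{prop:5}, now written for $\mathbf{B}$. The one ingredient that these minor relations do not by themselves provide is invertibility, which Proposition~\ref{prop:5} also requires; here I would invoke Proposition~\ref{prop:7}, which guarantees that the submatrix formed by the last three rows and columns of any $\mathbf{A}_s \in L_s$ is invertible. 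With both hypotheses of Proposition~\ref{prop:5} satisfied, I conclude that $\mathbf{B}$ is a scaled GBR transformation.

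Finally, writing the scale part of $\mathbf{B}$ as $\beta$ and its GBR part with parameters $(\lambda,\mu,\nu)$ yields exactly the displayed block form for $\mathbf{A}_s$. Uniqueness of the quadruple $(\lambda,\mu,\nu,\beta)$ follows by reading off entries: $\beta = A_{44}$, then $\lambda = A_{22}/\beta = A_{33}/\beta$, $\mu = -A_{24}/\beta$ and $\nu = -A_{34}/\beta$, while invertibility of $\mathbf{B}$ (equivalently $\det\mathbf{B} = \beta^{3}\lambda^{2}\neq 0$) forces $\beta \neq 0$ and $\lambda \neq 0$. The only delicate part is the bookkeeping of the re-indexing in the first step; the conceptual point worth stressing is that invertibility must be imported from the Lorentz structure via Proposition~\ref{prop:7}, since it cannot be deduced from the minor relations alone.
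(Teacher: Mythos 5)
Your proof is correct and follows the paper's own argument exactly: invertibility of $\mathbf{B}$ is imported from Proposition~\ref{prop:7}, the minor conditions~\eqref{eq:34} are translated by re-indexing into equations~\eqref{eq:equationsGBR} for $\mathbf{B}$, and Proposition~\ref{prop:5} then yields $\mathbf{B} \in G_s$. You merely spell out two steps the paper leaves implicit --- the explicit verification that each $A^{i,j}_{k,l}$ equals the corresponding minor of $\mathbf{B}$, and the uniqueness of $(\lambda,\mu,\nu,\beta)$ read off from the entries with $\det\mathbf{B} = \beta^{3}\lambda^{2} \neq 0$ --- which is a welcome addition, not a deviation.
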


\begin{proof}
According to Proposition \ref{prop:7}, $\mathbf{B}$ is invertible. Besides, $\mathbf{A}_s$ fulfill equations \eqref{eq:34} {iff} $\mathbf{B}$ fulfill equations (\ref{eq:equationsGBR}), thus according to Proposition \ref{prop:5}, $\mathbf{B} \in G_s$.
\end{proof}

\end{document}